\theoremstyle{plain}
\newtheorem{theorem}{Theorem}[section]
\newtheorem{lemma}[theorem]{Lemma}
\theoremstyle{definition}
\newtheorem{definition}[theorem]{Definition}
\theoremstyle{remark}
\title{Unlocking High-Accuracy Differentially Private Image Classification through Scale}
\author[*,1]{Soham De}
\author[*,1]{Leonard Berrada}
\author{Jamie Hayes}
\author{Samuel L Smith}
\author{Borja Balle}
\affil[*]{Equal contributions}
 \affil[1]{DeepMind}
\begin{abstract}
Differential Privacy (DP) provides a formal privacy guarantee preventing adversaries with access to a machine learning model from extracting information about individual training points. 
Differentially Private Stochastic Gradient Descent (DP-SGD), the most popular DP training method for deep learning, realizes this protection by injecting noise during training.
However previous works have found that DP-SGD often leads to a significant degradation in performance on standard image classification benchmarks.
Furthermore, some authors have postulated that DP-SGD inherently performs poorly on large models, since the norm of the noise required to preserve privacy is proportional to the model dimension. 
In contrast, we demonstrate that DP-SGD on over-parameterized models can perform significantly better than previously thought.
Combining careful hyper-parameter tuning with simple techniques to ensure signal propagation and improve the convergence rate, we obtain a new SOTA without extra data on CIFAR-10 of 81.4\% under $\mathbf{(8, 10^{-5})}$-DP using a 40-layer Wide-ResNet, improving over the previous SOTA of 71.7\%.
When fine-tuning a pre-trained NFNet-F3, we achieve a remarkable 83.8\% top-1 accuracy on ImageNet under $\mathbf{(0.5, 8\cdot 10^{-7})}$-DP. Additionally, we achieve 86.7$\%$ top-1 accuracy under $\mathbf{(8, 8 \cdot 10^{-7})}$-DP, only 4.3$\%$ below the current non-private SOTA for this task.
We believe our results are a significant step towards closing the accuracy gap between private and non-private image classification.
\end{abstract}
\begin{document}
\maketitle

\section{Introduction}

Machine learning models trained with standard pipelines can be attacked by an adversary that seeks to reveal the data on which the model was trained.
For example, \citet{carlini2021extracting} showed that adversaries can generate and detect text sequences from the training set of a large transformer language model, while \citet{balle2022reconstructing} showed that powerful adversaries can reconstruct images in the training set of a classifier trained on CIFAR-10.
Alongside other results \citep{DBLP:journals/corr/abs-2112-03570,DBLP:conf/icml/Choquette-ChooT21,DBLP:journals/corr/abs-2102-02551}, these studies demonstrate that models trained on sensitive datasets present a significant privacy risk.

Differential Privacy (DP) \citep{dwork2006calibrating} is the gold standard technique for mitigating privacy attacks aimed at leaking individual training examples, and it has already been adopted in practice by a range of public and private organizations \citep{erlingsson_2014,differential_privacy_team_2017,DBLP:conf/kdd/Abowd18,nayak_2020,bird_2020,mcmahan_thakurta_2022}.
A differentially private algorithm is a randomized algorithm providing a formal guarantee that any single example in the training set can only influence the output distribution of the algorithm by a small, pre-specified amount.
This privacy guarantee, denoted $(\varepsilon,\delta)$-DP, is defined by two parameters $(\varepsilon, \delta)$, which we refer to as the \emph{privacy budget}.
The smaller these two parameters are, the closer the output distributions between training sets that differ by a single example, and therefore the more difficult it is for an adversary to infer whether any single data point was included during training.

The most popular method for training neural networks with DP is Differentially Private Stochastic Gradient Descent (DP-SGD) \citep{abadi2016deep}. 
DP-SGD replaces the usual mini-batch gradient estimate of SGD with a privatized version, in which the gradient of each training example is clipped to a maximum norm. 
In addition, Gaussian noise proportional to the clipping norm is added to the sum of the clipped gradients, which is sufficient to mask the contribution of any single example to the sum.
Each evaluation of a (privatized) mini-batch gradient incurs a privacy cost, and a privacy accountant \citep{abadi2016deep, mironov2019r} is used to track the total privacy budget spent throughout training. 
These parameters increase with every mini-batch seen during training, and decrease with the scale of the noise added, thus limiting the number of training iterations we can perform at a fixed privacy budget while keeping the variance in the gradient estimate under control.

Training with DP-SGD involves a delicate balancing act between different hyper-parameters such as the amount of added noise, the batch size, and the number of training iterations, in order to reach optimal performance within a specified privacy budget.
In particular, the noise added to the gradient is a significant barrier to optimization, typically resulting in a significant degradation in performance compared to standard non-private training \citep{klause2022differentially, DormannFAP21, Kurakin22}. 
Furthermore, several authors have postulated that highly over-parameterized models, which perform well in non-private settings, do not work well when used with DP-SGD, because the norm of the added noise increases with the dimension of the gradient \citep{DBLP:conf/iclr/TramerB21, shen2021towards, YuZ0L21, Kurakin22}, leading to a ``curse of dimensionality''. 
Consequently, many works have focused either on developing specialized architectures for private training \citep{DBLP:conf/aaai/PapernotT0CE21, DBLP:conf/iclr/TramerB21}, or on reducing the dimensionality of the model during training \citep{YuZ0L21, ZhouW021, DBLP:journals/corr/abs-2103-01294,DBLP:journals/corr/abs-2203-11481}.
\begin{figure}[t]
    \centering
    \subfigure[CIFAR-10 without extra data]{
        \includegraphics{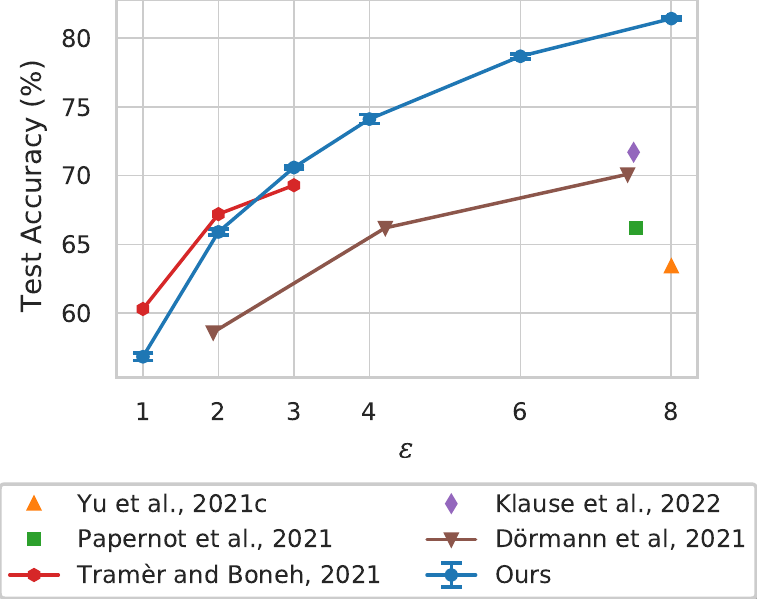}  
        \label{fig:headline_a}
    } 
    \hspace{3mm}
    \subfigure[ImageNet with extra data]{
        \raisebox{10pt}{\includegraphics{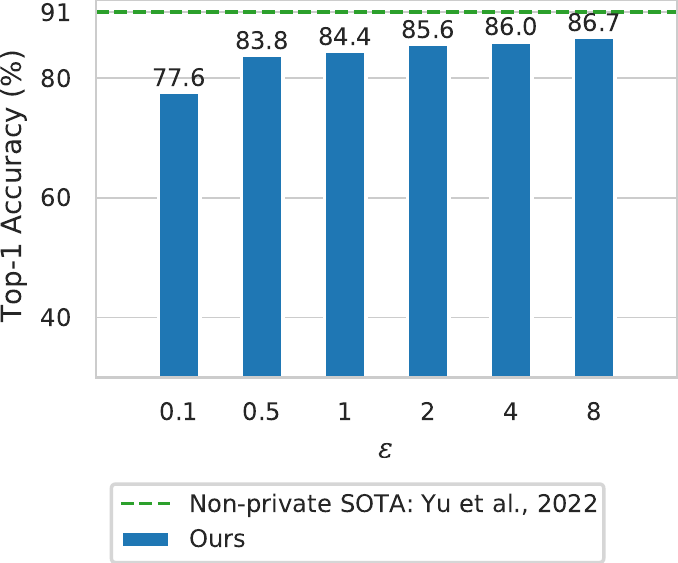}}
        \label{fig:headline_b}
    }
    \caption{
    (a) When training on CIFAR-10 without additional data, we improve on previously published results under $(\varepsilon, 10^{-5})$-DP whenever $\varepsilon \geq 3$.
    At $\varepsilon=8$, we improve on the previous SOTA of \citet{klause2022differentially} by $9.7\%$. 
    Note we report the mean and standard error across 5 independent runs. 
    (b) When fine-tuning a pre-trained NFNet-F3 \citep{DBLP:conf/icml/BrockDSS21} on ImageNet under $(8, 8 \cdot 10^{-7})$-DP, we achieve $86.7\%$ top-1 accuracy, only $4.3\%$ below the current non-private SOTA of $91.0\%$ \citep{yu2022coca}. We also obtain $83.8\%$ top-1 accuracy under a much tighter $(0.5, 8\cdot 10^{-7})$-DP guarantee, which exceeds the performance of many popular non-private models (e.g., ResNet-50). 
}
    \label{fig:headline_fig1}
\end{figure} 

On the contrary, we show that standard over-parameterized architectures, which achieve close to state-of-the-art performance in non-private training, can also perform very well when trained using DP-SGD if properly tuned. 
To achieve this, we introduce a number of techniques which help convergence and ensure trainability at initialization, and we explore the benefits of using pre-trained models. 
Our main contributions are listed below:
\begin{itemize}
    \item We describe a set of simple techniques which, when combined, significantly improve the performance of DP-SGD. 
    First, we revisit ideas that have previously been identified as useful for private training, including using large batch sizes \citep{DBLP:conf/iclr/McMahanRT018}, and replacing batch normalization layers with alternatives that ensure good signal propagation at initialization \citep{DBLP:journals/corr/abs-2007-05089}.
    In addition, we propose further modifications that improve the convergence rate of DP-SGD, and that have not previously been used for private training.
    Specifically, we suggest using weight standardization in convolutional layers \citep{qiao2019micro}, leveraging the benefits of data augmentation by averaging per-example gradients across multiple augmentations of the same image before the clipping operation \citep{hoffer19} and applying parameter averaging techniques \citep{polyak1992acceleration}.
    \vspace{3mm}
    \item Applying the techniques above, we significantly improve the performance of DP-SGD when training randomly initialized over-parameterized models. 
    Training Wide-ResNets \citep{DBLP:conf/bmvc/ZagoruykoK16} on CIFAR-10 without extra data, we achieve a new SOTA of 81.4\% under $(8, 10^{-5})$-DP.
    This is a substantial improvement on the previous SOTA of 71.7\% achieved under $(7.5, 10^{-5})$-DP \citep{klause2022differentially}. 
    As shown in \Cref{fig:headline_a}, we achieve SOTA results on this task across a range of $\varepsilon$ values between 3 and 8. 
    We also achieve a new SOTA top-1 accuracy on ImageNet of 32.4$\%$ under $(8, 8\cdot 10^{-7})$-DP when training a Normalizer-Free ResNet-50 (NF-ResNet-50) \citep{he2016deep, brock2020characterizing}.
    \vspace{3mm}
    \item We show that non-private pre-training on public/non-sensitive data, followed by fine-tuning with DP-SGD on the private dataset, yields remarkable performance benefits on image classification benchmarks. For example, when privately fine-tuning an NF-ResNet-200 pre-trained on JFT-300M \citep{sun2017revisiting}, we achieve 81.3$\%$ top-1 accuracy on ImageNet under $(8, 8\cdot 10^{-7})$-DP. We observe further improvements in performance from increasing both the size of the model and the size of the pre-training dataset, achieving 86.7$\%$ top-1 accuracy under $(8, 8\cdot 10^{-7})$-DP with an NFNet-F3 \citep{DBLP:conf/icml/BrockDSS21} pre-trained on JFT-4B. This network also obtains 83.8$\%$ top-1 accuracy under a much tighter privacy budget of $(0.5, 8 \cdot 10^{-7})$-DP. For comparison, fine-tuning the same pre-trained network on ImageNet without privacy reaches 88.5$\%$. 
\vspace{3mm}
    \item We provide novel insights into how optimal hyper-parameters relate to each other when training with DP. 
    We empirically observe that (1) there is an optimal budget of training iterations given a fixed batch-size, (2) larger batch sizes improve validation accuracy but require more training epochs after the batch size exceeds a certain threshold, and (3) the optimal choice of learning-rate for DP-SGD is proportional to the batch size when the batch size is small but constant for large batch sizes, similar to non-private training.
\end{itemize}

We summarize our key results in \Cref{tab:summary_results}, with our SOTA results shown in bold.\footnote{Shortly after our paper was released, \citet{klause2022differentially} updated their paper to combine our techniques with their suggested ScaleNorm,
achieving a new SOTA test accuracy of 82.5\% on CIFAR-10 without extra data at $\varepsilon=8$. Additionally, \citet{bu2022scalable} also published their work showing the benefits of large models when fine-tuning with differential privacy by achieving a new SOTA test accuracy with extra data of 96.7\% on CIFAR-10 and 83.0\% on CIFAR-100 under $(1, 10^{-5})$-DP using a large ViT model pre-trained on ImageNet.}
We emphasize that all our results use standard vision architectures which have been shown to work well for non-private training.
We believe these results are a significant step towards practically useful differentially private image classification.

\begin{table}[t!]
    \centering
    \caption{
    A summary of the best results provided in this paper when training with DP-SGD. 
    All numbers in bold are SOTA.
    For the CIFAR-10 and CIFAR-100 experiments, we report the median accuracy across 5 independent runs. 
    All experiments on CIFAR use Wide-ResNets with group normalization, while the ImageNet and Places-365 experiments use NF-ResNets or NFNets. 
    See relevant sections for further details.}
    \label{tab:summary_results}
    \begin{tabular}{lccccccc}
    \toprule
    \multirow{2}{*}{Dataset} & \multirow{2}{*}{Pre-Training} & \multicolumn{4}{c}{Top-1 Accuracy (\%)} & \multirow{2}{*}{$\delta$} & \multirow{2}{*}{Section} \\
    \cmidrule(lr){3-6}
    & Data &			$\varepsilon=1$	& $\varepsilon=2$	& $\varepsilon=4$	& $\varepsilon=8$	& & \\
    \midrule
    CIFAR-10 & -- & 56.8 & 65.9 & \textbf{73.5} & \textbf{81.4} & $10^{-5}$ & \ref{sec:cifar_sota} \\
    ImageNet & -- & -- & -- & -- & \textbf{32.4} & $8 \cdot 10^{-7}$ & \ref{sec:imagenet_scratch} \\
    \midrule
    CIFAR-10 & ImageNet & \textbf{94.7} & \textbf{95.4} & \textbf{96.1} & \textbf{96.7} & $10^{-5}$ & \ref{sec:cifar_finetune} \\
    CIFAR-100 & ImageNet & \textbf{70.3} & \textbf{74.7} & \textbf{79.2} & \textbf{81.8} & $10^{-5}$ & \ref{sec:cifar_finetune} \\
    ImageNet & JFT-4B & \textbf{84.4} & \textbf{85.6} & \textbf{86.0} & \textbf{86.7} & $8 \cdot 10^{-7}$ & \ref{sec:imagenet} \\
    Places-365 & JFT-300M & -- & -- & -- & \textbf{55.1} & $5 \cdot 10^{-7}$ & \ref{sec:places} \\
    \bottomrule
    \end{tabular}
\end{table}

\paragraph{Paper outline.} 
We provide a brief introduction to Differential Privacy and DP-SGD in \Cref{sec:background}, where we also discuss the challenges that arise when applying DP-SGD to deep networks. 
In \Cref{sec:dptraining_protocol}, we describe a range of techniques that enhance the performance of networks trained with DP-SGD, achieving SOTA performance on CIFAR-10 and ImageNet when training without additional data. 
In \Cref{sec:pre-training}, we show that privately fine-tuning strong pre-trained models dramatically improves the performance of private image classification. 
Finally, we provide additional insights into how the hyper-parameters of DP-SGD influence performance in \Cref{sec:cifar_tuning}.

\paragraph{Reproducibility.} 
To help researchers reproduce and verify our results, we are releasing  the implementation of DP-SGD used in our experiments at \url{https://github.com/deepmind/jax_privacy}.
We also provide the configuration scripts and pre-trained checkpoints necessary to reproduce all of our results on CIFAR-10 and CIFAR-100, as well as our results on ImageNet without extra data.
We provide further details about our DP-SGD implementation in \Cref{app:jax-dp-sgd}, together with a description of the steps we undertook to audit its correctness.

\section{Background}
\label{sec:background}
\subsection{Differential Privacy (DP)}
Differential privacy (DP) is a formal privacy guarantee that applies to randomized data analysis algorithms. 
By construction, differentially private algorithms prevent an adversary that observes the output of a computation from inferring any property pertaining to individual data points in the input data used during the computation. 
The strength of this guarantee is controlled by two parameters: $\varepsilon > 0$ and $\delta \in [0,1]$. 
Roughly speaking, $\varepsilon$ bounds the log-likelihood ratio of any particular output that can be obtained when running the algorithm on two datasets differing in a single data point, and $\delta$ is a small probability which bounds the occurrence of infrequent outputs that violate this bound. 
The privacy guarantee becomes stronger as both parameters get smaller. 
A standard rule of thumb states that, to obtain meaningful privacy, $\varepsilon$ should be a small constant while $\delta$ should be smaller than $1/ N$, where $N$ is the size of the input dataset. 
More formally, we have the following.

\begin{definition}[Differential Privacy \citep{dwork2006calibrating}]
Let $A: \mathcal{D} \to \mathcal{S}$ be a randomized algorithm, and let $\varepsilon > 0$, $\delta \in [0, 1]$.
We say that $A$ is $(\varepsilon, \delta)$-DP if for any two neighboring datasets $D, D' \in \mathcal{D}$ differing by a single element, we have that
\begin{align}
\forall \: S \subset \mathcal{S}, \: \mathbb{P}[A(D) \in S] \leq \exp(\varepsilon) \mathbb{P}[A(D') \in S] + \delta \enspace.
\label{eq:dp_definition}
\end{align}
\end{definition}

The privacy protection afforded by DP holds under an exceedingly strong threat model: inferences about individuals are protected even in the face of an adversary that has full knowledge of the DP algorithm, unbounded computational power, and arbitrary side knowledge about the input data.
Furthermore, DP satisfies a number of appealing properties from the algorithm design standpoint, including preservation under post-processing and a smooth degradation with multiple accesses to the same data.
These properties are exploited in the construction of complex DP algorithms based on the combination of small building blocks that inject carefully calibrated noise into operations that access the data.
The magnitude of the noise required to satisfy the privacy guarantee increases with the strength of the privacy parameters, leading to an unavoidable trade-off between utility and privacy, as illustrated by the Fundamental Law of Information Recovery \citep{TCS-042}.

Together, the strength of the formal guarantee it provides and the variety of tools available for the construction of DP algorithms, have led to the growing adoption of DP as a gold standard for privacy-preserving machine learning.
For convex learning problems there exists a variety of methods for obtaining differentially private algorithms, including output perturbation \citep{DBLP:journals/jmlr/ChaudhuriMS11,DBLP:conf/sigmod/0001LKCJN17}, objective perturbation \citep{DBLP:journals/jmlr/ChaudhuriMS11,DBLP:journals/jmlr/KiferST12} and gradient perturbation \citep{DBLP:conf/globalsip/SongCS13,DBLP:conf/focs/BassilyST14}. 
The nature of convex problems enables the formal analysis of the privacy-utility offered by these algorithms, and by now there are large classes of problems for which algorithms achieving (nearly) optimal privacy-utility trade-offs are known \citep{DBLP:conf/focs/BassilyST14,DBLP:conf/nips/TalwarTZ15,DBLP:conf/stoc/FeldmanKT20,pmlr-v130-song21a,DBLP:conf/icml/AsiFKT21}.
For non-convex learning problems, the range of available algorithms is more limited and privacy-utility trade-offs are harder to analyze theoretically.
Nonetheless, for such problems there exist two families of algorithms that have been shown to achieve reasonable privacy-utility-computation trade-offs in practice: gradient perturbation applied to standard optimizers like SGD \citep{abadi2016deep}, and private aggregation of teacher ensembles \citep{DBLP:conf/iclr/PapernotSMRTE18}.
In this work we focus on the former, which is most commonly used.

\subsection{Differentially Private Stochastic Gradient Descent (DP-SGD)}
\label{sec:background_dpsgd}

In this work, we assume that the differentially private algorithm $A$ (see \Cref{eq:dp_definition}) is a learning algorithm that maps a training dataset $D=\{(x_i, y_i)\}_{1 \leq i \leq N}$ to a vector of learned neural network parameters $w \in \mathcal{S} = \mathbb{R}^p$.
Let $\mathcal{L}(w, x, y)$ denote the learning objective (e.g., the cross-entropy loss), given the model parameters $w$, input example $x$ and label $y$. 
For convenience, we use the shorthand notation $l_i(w) = \mathcal{L}(w, x_i, y_i)$.

In the non-private setting, a parameter update using Stochastic Gradient Descent (SGD) at iteration $t$ draws $B$ examples at random from the dataset, and performs an update of the form:
$$
w^{(t+1)} = w^{(t)} - \eta_t \frac{1}{B} \sum\limits_{i \in \mathcal{B}_t} \nabla l_i (w^{(t)}) \enspace,
$$
where $\eta_t$ is the step-size for the $t^{th}$ update, $\nabla$ denotes the gradient operator, and $\mathcal{B}_t$ represents the set of examples sampled at iteration $t$ with $|\mathcal{B}_t|=B$. 
In order to make this algorithm differentially private, we apply the following modifications. 
First, the gradient for each example in the mini-batch is clipped to a maximal norm $C$, and second, Gaussian noise with standard deviation proportional to $C$ is added to the mean of the clipped gradients.
Let $\texttt{clip}_C: v \in \mathbb{R}^p \mapsto \min\left\{1,  \tfrac{C}{\|v\|_2}\right\} \cdot v \in \mathbb{R}^p$ denote the clipping function which re-scales its input so that the output has a maximal $\ell_2$ norm of $C$.
The new update step is:
\begin{align}
    w^{(t+1)} = w^{(t)} - \eta_t \left\{ \frac{1}{B} \sum\limits_{i \in \mathcal{B}_t} \texttt{clip}_C \left(\nabla l_i (w^{(t)}) \right) + \frac{\sigma C}{B} \xi \right\} \enspace, \label{eq:dpsgd}
\end{align}
where $\xi \sim \mathcal{N}(0, I_p)$ is a standard $p$-dimensional Gaussian random variable and $\sigma$ specifies the standard deviation of the added noise.
The resulting algorithm is called Differentially Private-Stochastic Gradient Descent (DP-SGD) \citep{abadi2016deep}.
Intuitively, performing a model update using \Cref{eq:dpsgd} provides differential privacy because adding Gaussian noise with standard deviation proportional to $C$ is sufficient to mask the contribution of any single example whose clipped gradient has norm less than or equal to $C$.
While we use privatized SGD as our optimizer throughout this work, a similar privatization method can also be used in combination with other first-order optimization algorithms, such as SGD with momentum or Adam \citep{https://doi.org/10.48550/arxiv.1812.06210}.

Throughout this paper we use a modified version of DP-SGD where the privatized gradient is normalized by $C$:
\begin{align}
    w^{(t+1)} = w^{(t)} - \eta_t \left\{ \frac{1}{B} \sum\limits_{i \in \mathcal{B}_t}  \frac{1}{C}\texttt{clip}_C \left(\nabla l_i (w^{(t)}) \right) + \frac{\sigma}{B} \xi \right\} \enspace. \label{eq:dpsgd_rescaled}
\end{align}
This is a re-parameterization of \Cref{eq:dpsgd} in which the learning rate $\eta_t$ absorbs a factor of $C$. 
This has no effect on the privacy guarantees, but ensures the clipping norm does not influence the scale of the update, which simplifies hyper-parameter tuning (See \Cref{app:clipping_rescaling}).  
Note that to preserve the DP guarantees, we must divide by $C$ \emph{after} the clipping operation.
\Cref{app:implementation} provides further details about our DP-SGD implementation, including a description of our approach to virtual batching to enable training with large batch sizes.

\paragraph{Privacy accounting.}
The privacy guarantee of DP-SGD is determined by three parameters: the standard deviation $\sigma$, the sampling ratio $q = B / N$ and the number of training iterations $T$.
In practice, the privacy budget $(\varepsilon, \delta)$ is usually fixed, and these three hyper-parameters are chosen to provide the best possible performance within this budget. 
There may also be additional practical constraints (e.g., the maximum compute budget available).
The privacy calibration process is performed using a privacy accountant: a numerical algorithm providing tight upper bounds for the privacy budget as a function of the hyper-parameters \citep{abadi2016deep}, which in turn can be combined with numerical optimization routines to optimize one hyper-parameter given the privacy budget and the other two hyper-parameters.
In this work we use the accounting method for DP-SGD proposed by \cite{mironov2019r} and implemented in TensorFlow Privacy \citep{tensorflowprivacy}.
This privacy accountant relies on a \say{composition} analysis across iterations, which allows us to release not only the final model, but also \emph{every} intermediate model obtained during training (under the same privacy budget).

\subsection{Challenges of DP-SGD}
\label{sec:background:challenges}

As described above, there are three key differences between DP-SGD and non-private SGD:
(1) the per-example gradients are clipped to a maximal $\ell_2$ norm before they are averaged, (2) Gaussian noise is added to the average of the clipped gradients, and (3) the maximum number of updates allowed within the privacy budget is bounded, and depends on the batch size/added noise.
These differences introduce a number of challenges:

\paragraph{Hyper-parameter tuning and regularization.}
The noise added to the gradient estimate in the DP-SGD update (\Cref{eq:dpsgd_rescaled}) is a significant barrier to efficient optimization, and if we reduce the scale of this noise, the number of training iterations allowed within the privacy budget decreases. 
This constraint alters the optimal values of key hyper-parameters like the batch size/learning rate, and defaults from non-private training can be highly sub-optimal \citep{DBLP:conf/aaai/PapernotT0CE21}.
Consequently, DP-SGD requires careful hyper-parameter tuning.\footnote{In this paper we do not account for the privacy cost of hyper-parameter tuning \citep{papernot2021hyperparameter}, and instead favour thorough sweeps on the same dataset in order to better understand how the hyper-parameters of DP-SGD influence model performance.}
 
We also found in our experiments that, when training with DP-SGD, improvements in training accuracy usually translate directly to improved generalization, without requiring strong regularization. 
Inspired by this observation, our philosophy is that methods that reduce the number of training iterations required to reach high training accuracy in non-private training are likely to improve the test accuracy achieved in private training. 
Consistent with this approach, it is usually beneficial to remove explicit regularization methods.

\paragraph{Bias and variance of the DP-SGD update.} 
The gradient estimator used by DP-SGD is biased because of the use of per-example gradient clipping, and in general it does not correspond to the gradient of any differentiable function \citep{pmlr-v130-song21a}.
More importantly, the clipping norm $C$ introduces a bias-variance trade-off \citep{ChenWH20, DBLP:journals/corr/abs-1905-03871}. 
This can be viewed from the DP-SGD update shown in \Cref{eq:dpsgd_rescaled}.
When $C$ is very large, $\texttt{clip}_C$ is the identity function, so the privatized gradient is an unbiased estimator of the true gradient, but the clipped gradient $\frac{1}{C}\texttt{clip}_C \left(\nabla l_i (w^{(t)}) \right)$ is very small compared to the noise (which is independent of $C$) -- overall the privatized gradient estimate has low bias and high variance. 
Conversely, if $C$ is small, the clipping operation introduces bias, but the clipped gradient $\frac{1}{C}\texttt{clip}_C \left(\nabla l_i (w^{(t)}) \right)$ is larger, and thus is not necessarily small compared to the noise -- overall the privatized gradient estimate has high bias and low variance.
Note that when $C$ is very small (smaller than the smallest per-example gradient norm), further lowering $C$ does not change $\frac{1}{C}\texttt{clip}_C \left(\nabla l_i (w^{(t)}) \right)$, which indicates that the bias and variance in the update both approach a constant as $C \rightarrow 0$. 
Intriguingly, previous work has observed that wide ranges of the clipping norm $C$ can provide near optimal performance provided that (i) the clipping norm is small enough, and (ii) the learning-rate is re-scaled accordingly \citep{Li2021,Kurakin22}.
This suggests that reducing the variance introduced by noise may be more important than reducing the bias introduced by clipping.

\paragraph{Making standard models work.}
Differentially private training has recently obtained promising results with standard architectures in NLP, both when training a BERT model \citep{devlin2018bert} from random initialization \citep{anil2021large}, and when fine-tuning a large Transformer language model \citep{vaswani2017attention} from a pre-trained set of parameters \citep{Li2021,Yu2021lm}.
However, similar results have not been obtained in computer vision, and the literature does not provide clear recommendations on which model architectures perform well. For instance, surveying recent research on private training for CIFAR-10, \citet{Kurakin22}, \citet{DBLP:conf/aaai/PapernotT0CE21} and \citet{DormannFAP21}
use variants of shallow VGG models \citep{DBLP:journals/corr/SimonyanZ14a}, while \citet{DBLP:conf/iclr/TramerB21} use ScatterNets \citep{DBLP:conf/cvpr/OyallonM15} to train linear models on handcrafted features, achieving an impressive $69.3\%$ test accuracy under a tight privacy budget of $(3, 10^{-5})$-DP. 
Finally, \citet{klause2022differentially} achieve the SOTA test accuracy for $\varepsilon \leq 8$ without extra data of $71.7\%$ when training a shallow 9-layer residual network \citep{he2016deep} under $(7.5, 10^{-5})$-DP.

The $\ell_2$ norm of the noise added in the DP-SGD update scales proportional to the dimension of the gradient (the number of parameters). 
This observation has led many researchers to believe that standard over-parameterized models will perform poorly with DP-SGD, and instead focus on reducing the explicit or implicit dimension of the update, either through the use of small models/hand-crafted features \citep{DBLP:conf/iclr/TramerB21} or through dimensionality reduction techniques \citep{yu2021large, YuZ0L21}. 
Another key obstacle to the use of standard models for private training in computer vision has been that in order to provide tight DP guarantees, DP-SGD requires that the gradients evaluated on different training examples are independent. 
This excludes the use of any method that enables communication between training examples, such as batch normalization \citep{DBLP:conf/icml/IoffeS15}, which until recently has been almost ubiquitous in standard vision architectures \citep{he2016deep, DBLP:conf/bmvc/ZagoruykoK16,tan2019efficientnet, DBLP:conf/icml/BrockDSS21, dosovitskiy2020image}.

\section{Improving the Privacy-Utility Trade-off of DP-SGD in Image Classification}
\label{sec:dptraining_protocol}

In this section, we describe the key techniques we use to enhance the performance of models trained with DP-SGD on standard image classification benchmarks. 
In all the experiments in this section we train randomly initialized models without using any extra data. 
In \Cref{sec:convergence} we provide an empirical ablation of these techniques on CIFAR-10, while in \Cref{sec:cifar_sota,sec:imagenet_scratch} we evaluate the performance of our best models on CIFAR-10 and ImageNet. 
We focus on standard architectures popular in the computer vision community, since we believe that these models represent the most promising avenue for achieving long-term progress.

\subsection{Training on CIFAR-10 Without Additional Data -- An Ablation Study}
\label{sec:convergence}

In \Cref{table:method_ablation} we provide an ablation study of a range of techniques, which collectively significantly enhance the performance of DP-SGD when training from random initialization on CIFAR-10 \citep{krizhevsky2009learning}. 
These techniques include replacing batch normalization with group normalization \citep{DBLP:journals/ijcv/WuH20}, using large batch sizes, weight standardization \citep{qiao2019micro}, a modification to DP-SGD which we call augmentation multiplicity \citep{hoffer19, touvron2021training, Fort2021} and parameter averaging \citep{polyak1992acceleration}. 
Prior research has already identified group normalization and large batch sizes as beneficial for training with DP-SGD \citep{DBLP:journals/corr/abs-2007-05089, LuoW0021,Kurakin22,yu2021large,Li2021,anil2021large,DormannFAP21}, while to our knowledge the other techniques have not previously been used for private training. 
We first define our baseline model/training pipeline, and then discuss each modification in turn. 
For all experiments in this section, we split the official CIFAR-10 training dataset of 50K examples into a training set of 45K examples and a validation set of 5K examples. 
We train with DP-SGD on this reduced training set under $(8, 10^{-5})$-DP. 
In order to avoid tuning on the official held-out test set, throughout this subsection we report accuracies obtained on the training and validation set only.

\paragraph{Baseline model and private training pipeline for CIFAR-10.}
As our baseline for CIFAR-10, we study the Wide-ResNet (WRN) model family \citep{DBLP:conf/bmvc/ZagoruykoK16}.
We note that WRN models obtain high accuracies on CIFAR-10 when trained non-privately: the 16 layer network with width factor 4 (denoted as WRN-16-4) achieves >94$\%$ test accuracy, while the WRN-40-4 achieves >95$\%$ \citep{DBLP:conf/bmvc/ZagoruykoK16}. 
The model parameters are initialized using Gaussian random variables following \citet{glorot2010understanding}. 
We train using DP-SGD without Momentum (\Cref{eq:dpsgd_rescaled}). 
We observed no benefit from decaying the learning rate during training, and therefore use a constant learning rate in all experiments reported in this paper. 
We also do not use weight decay or dropout which we found to reduce both training and validation accuracies for private training. 
This is similar to observations made in \citet{DBLP:conf/iclr/TramerB21}, although \citet{anil2021large} reported improved performance when using weight decay with DP-Adam on BERT models. 
Unless otherwise specified, we train without data augmentation.
For all experiments in this subsection, we tune the learning rate $\eta$ and the noise parameter $\sigma$ on the validation set. 
We fix the clipping norm $C=1$ for all experiments in this paper. Given a target privacy guarantee $(\varepsilon, \delta)$-DP and specific hyper-parameter settings for $\sigma$ and $q=B/N$, we compute the maximum number of training iterations $T$ using the privacy accountant discussed in \Cref{sec:background_dpsgd}. 
We provide additional details for each of our experiments in \Cref{app:experimental_details}.

\begin{table}[t]
\caption{\label{table:method_ablation}
    An ablation study on the effect of a range of architectural modifications and changes to the training pipeline for models trained on CIFAR-10 under $(8, 10^{-5})$-DP. 
    We report median and standard deviation values over 5 runs. 
    The baseline is a WRN-40-4 without batch normalization trained with DP-SGD using batch-size 256 without data augmentation. 
    We report accuracy on our own validation set, not the official test set.
}
\begin{center}
\begin{tabular}{lcc}
\toprule [0.15em]
 & \multicolumn{2}{c}{Accuracy (\%)} \\
 \cmidrule{2-3}
 & Validation & Training \\
\midrule [0.1em]
Baseline (WRN-40-4 w/o batch normalization) & 50.8 \: {\color{gray} (0.7)} & 51.2 \: {\color{gray} (0.7)}  \\
+ Group normalization (16 groups) & 66.3 \: {\color{gray} (0.6)} & 67.9 \: {\color{gray} (0.3)}  \\
+ Larger batch size (batch size of 4096) & 70.0 \: {\color{gray} (0.6)}  & 73.4 \: {\color{gray} (0.9)} \\
+ Weight standardization & 71.2 \: {\color{gray} (1.0)} & 74.7 \: {\color{gray}  (1.3)} \\
+ Augmentation multiplicity (16 augmentations) & 78.4 \: {\color{gray} (0.9)}  & 79.4  \: {\color{gray} (0.9)} \\
+ Parameter averaging (exponential moving average) & 79.7 \: {\color{gray} (0.2)}  & 81.5 \: {\color{gray} (0.2)} \\
\bottomrule[0.15em]
\end{tabular}
\end{center}
\end{table}

\paragraph{Training deep networks without batch normalization.} 
As mentioned previously, most computer vision architectures, including the WRN model family, contain batch normalization layers, which are not compatible with DP-SGD. 
Therefore, for our experiments on WRNs, we replace all batch normalization layers by group normalization layers \citep{DBLP:journals/ijcv/WuH20}, as has been previously done by several authors \citep{DBLP:journals/corr/abs-2007-05089,LuoW0021,Kurakin22,yu2021large}. 
Group normalization splits the channels of the hidden activations of a single image into groups and normalizes the hidden activations within each group. 
It therefore does not break the independence between gradients evaluated on different examples.
Note however that it is crucial to place the group normalization layers on the residual branch of the network to recover the benefits of batch normalization for training deep networks \citep{DBLP:conf/nips/DeS20}.
We fix the number of groups of the group normalization layers to 16 in all our experiments.
In \Cref{table:method_ablation}, we show that this simple change significantly improves the performance of deep WRNs when training with DP-SGD.

A range of other alternatives to batch normalization have recently been identified in non-private deep learning \citep{zhang2018fixup, DBLP:conf/nips/DeS20, brock2020characterizing, kolesnikov2020big}. 
These alternatives can train very deep networks \citep{zhang2018fixup, DBLP:conf/nips/DeS20} and some even achieve superior train and test accuracies on standard benchmarks such as ImageNet classification \citep{brock2020characterizing, DBLP:conf/icml/BrockDSS21}. 
We will provide additional results when training Normalizer-Free ResNets \citep{brock2020characterizing} in later sections.

\begin{figure}[t]
\begin{minipage}[b]{0.45\linewidth}
     \centering
    \includegraphics{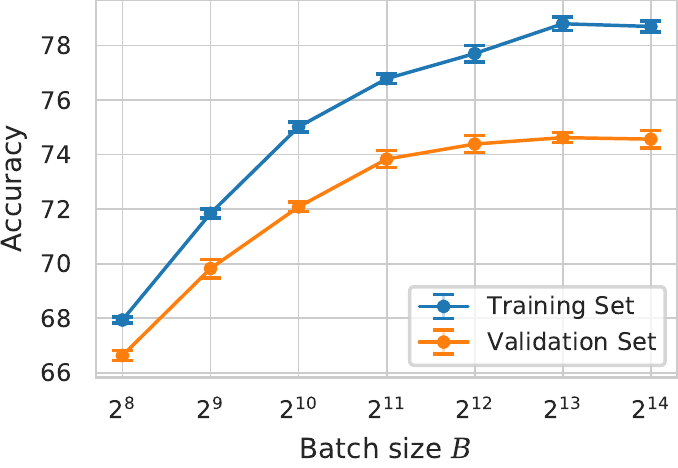}
    \caption{
    Increasing batch sizes on the WRN-16-4 model leads to improved training and validation accuracy under $(8,10^{-5})$-DP. 
    We plot the mean and standard error across 5 independent runs.}
    \label{figure:batchsize_ablation}
\end{minipage}
\hspace{0.5cm}
\begin{minipage}[b]{0.45\linewidth}
    \centering
    \includegraphics{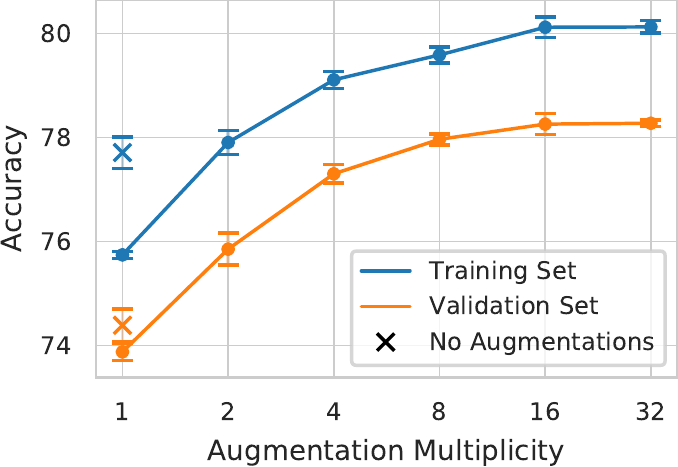}
    \caption{
        Increasing augmentation multiplicities on WRN-16-4 leads to improved training and validation accuracy under $(8, 10^{-5})$-DP. 
        We plot the mean/standard error across 5 independent runs.}
    \label{figure:augmult_ablation}
\end{minipage}
\hspace{0.5cm}
\end{figure}

\paragraph{Large batch sizes.} As observed in several recent papers
\citep{hoory2021learning,Li2021,LuoW0021,anil2021large,DormannFAP21}, increasing the batch size can significantly improve the privacy-utility trade-off of DP-SGD. 
In \Cref{table:method_ablation} we show that increasing the batch size from 256 to 4096 significantly increases both training and validation accuracy. 
Additionally in \Cref{figure:batchsize_ablation}, we evaluate the effect of the batch size on a WRN-16-4 where we sweep the batch size from $2^8 = 256$ to $2^{14} = 16384$. 
We find that performance consistently improves as the batch size rises. 
The effect of increasing the batch size is discussed further in \Cref{sec:cifar_tuning}.

We note that the gap between the training and validation accuracy in \Cref{figure:batchsize_ablation} also increases with the batch size.
As discussed by \citet{yeom2018privacy}, increased overfitting can be an indication of a rise in privacy leakage, and training under DP implies generalization bounds that prevent overfitting when $\varepsilon$ is small \citep{doi:10.1126/science.aaa9375, DBLP:conf/stoc/BassilyNSSSU16}.
However these bounds degrade rapidly with increasing $\varepsilon$, and even the best bounds we are aware of are vacuous in our setting
\citep{DBLP:conf/colt/FeldmanS17,DBLP:conf/innovations/JungLN0SS20}.
In \Cref{app:debugging:lower_bounds} we audit our implementation using membership inference attacks exploiting the generalization gap to assess privacy leakage, and find that the observed gaps are within the theoretical predictions for all values of $\varepsilon$ considered.

\paragraph{Weight standardization.}
Next, we apply Weight Standardization \citep{qiao2019micro} to all convolutional layers, which normalizes the rows of the weight matrix of each convolution over the fan-in of each output unit (See \Cref{app:experimental_details:models} for implementation details). 
Several authors have shown that using weight standardization combined with group normalization can be an effective replacement for batch normalization in non-private training, especially when training with large batch size \citep{qiao2019micro, richemond2020byol, kolesnikov2020big}, while \citet{brock2020characterizing} found Weight Standardization improves the performance of NF-ResNets.
We found that weight standardization also improves the performance of DP-SGD, particularly for large batch sizes. 
In \Cref{table:method_ablation}, we show the improvement in both the training and validation accuracies when using weight standardization with group normalization and a large batch size of 4096 on the WRN-40-4 model.

\paragraph{Augmentation multiplicity.} 
Data augmentation is a crucial component of non-private training, which significantly improves the performance of over-parameterized models on vision tasks \citep{shorten2019survey, zhang2017mixup, cubuk2020randaugment}. 
However, in our experiments with DP-SGD, we found that data augmentation as it is usually implemented in non-private training (using one augmentation per independent example in the mini-batch) reduced both training and validation accuracies (See \Cref{figure:augmult_ablation}). 
We believe this occurs because data augmentation introduces variance into the gradient \citep{Fort2021}, which increases the number of training iterations required to minimize the loss. 
In non-private training, several authors have shown that using multiple augmentations per unique example in a mini-batch can improve performance \citep{hoffer19, Fort2021, touvron2021going}.
We now explore using multiple augmentations per example in the DP-SGD update to see if we can recover the benefits of data augmentation in private training.
The naive approach when using multiple augmentations per example with DP-SGD would be to compute one clipped gradient for each augmented image, however this would lead to a privacy cost scaling with the number of augmentations per training example in each mini-batch.
To circumvent this issue, we average the gradients of different augmentations of the same training example \emph{before} clipping the per-example gradients. 
This approach does not increase the sensitivity of the mini-batch gradient to any single training example, and therefore does not incur any additional privacy cost. 
The resulting update can be written as follows: 
\begin{align}
    w^{(t+1)} = w^{(t)} - \eta_t \left\{ \frac{1}{B} \sum\limits_{i \in \mathcal{B}_t} \frac{1}{C} \texttt{clip}_C \left(\frac{1}{K} \sum\limits_{j\in \mathcal{K}_t^i}  \nabla l_j (w^{(t)}) \right) + \frac{\sigma}{B} \xi \right\} \enspace. \label{eq:augmult}
\end{align}
Here $\mathcal{K}_t^i$ represents the set of $K$ randomly sampled augmentations for the $i^{th}$ training example on the $t^{th}$ iteration. 
We refer to the number of per-example augmentations $K$ as the \emph{augmentation multiplicity}. 
When training with augmentation multiplicity on CIFAR-10, we use random crops and random horizontal flips of the input images, following \citet{DBLP:conf/bmvc/ZagoruykoK16}. 
In \Cref{table:method_ablation}, we show that augmentation multiplicity, as described in \Cref{eq:augmult}, significantly boosts both the training and validation accuracy of the WRN-40-4 model. 
Additionally, we perform a sweep of augmentation multiplicities ranging from 2 to 32 on the WRN-16-4 model at a batch size of 4096 in \Cref{figure:augmult_ablation}, and show that performance consistently improves with higher augmentation multiplicities, significantly exceeding the performance achieved without data augmentation.

\paragraph{Parameter averaging.}
Since the privacy analysis for DP-SGD assumes that all intermediate parameter values obtained during training can be released, parameter averaging techniques do not incur any additional privacy cost. 
We found that using an exponential moving average (EMA) of the parameters consistently improved final accuracy on both the training and validation sets \citep{tan2019efficientnet}, as shown in \Cref{table:method_ablation} (See \Cref{app:experimental_details} for implementation details). 
All results in the remainder of this paper use EMA unless specified otherwise.

\begin{table}[t]
\caption{\label{table:cifar_from_scratch} CIFAR-10 test accuracy of our Wide-ResNet models trained with DP-SGD without additional data.
For our results, we report the median computed over five independent runs as well as the standard deviation. 
}
\begin{center}
\begin{tabular}{lccc}
\toprule [0.15em]
& \multirow{2}{*}{$\varepsilon$} & \multicolumn{2}{c}{Test Accuracy (\%)} \\
\cmidrule{3-4}
& & Median & Std. Dev. \\
\midrule [0.1em]
\multirow{3}{*}{\citet{DBLP:conf/iclr/TramerB21}}
    & 1 & \textbf{60.3} & -- \\
& 2 & \textbf{67.2} & -- \\
& 3 & 69.3 & -- \\ \midrule
\multirow{3}{*}{\citet{DormannFAP21}} & 1.93 & 58.6 & -- \\
& 4.21 & 66.2 & -- \\
&	7.42 & 70.1 & -- \\ \midrule
\citet{klause2022differentially} & 7.5 & 71.7 & -- \\
\midrule
\multirow{6}{*}{Ours (WRN-16-4)} 
     & 1 & 56.8 & {\color{gray} (0.6)} \\
     & 2 & 64.9 & {\color{gray} (0.5)} \\
     & 3 & 69.2 & {\color{gray} (0.3)} \\
     & 4 & 71.9 & {\color{gray} (0.3)} \\
     & 6 & 77.0 & {\color{gray} (0.8)} \\
     & 8 & 79.5 & {\color{gray} (0.7)} \\
\midrule
\multirow{6}{*}{Ours (WRN-40-4)} 
     & 1 & 56.4          & {\color{gray} (0.6)} \\
     & 2 & 65.9 & {\color{gray} (0.5)} \\
     & 3 & \textbf{70.7} & {\color{gray} (0.2)} \\
     & 4 & \textbf{73.5} & {\color{gray} (0.6)} \\
     & 6 & \textbf{78.8} & {\color{gray} (0.4)} \\
     & 8 & \textbf{81.4} & {\color{gray} (0.2)} \\
\bottomrule[0.15em]
\end{tabular}
\end{center}
\end{table}

\subsection{Training on CIFAR-10 Without Additional Data -- Official Evaluation}
\label{sec:cifar_sota}

In \Cref{sec:convergence}, we described a collection of techniques which substantially enhanced the performance of DP-SGD when training Wide-ResNets on CIFAR-10 \citep{DBLP:conf/bmvc/ZagoruykoK16}. 
For those experiments, we split the dataset of 50K examples into a training set of 45K examples and a validation set of 5K examples, and did not evaluate on the held out test set. 
In this section, we take the insights identified above, re-train our models on the full dataset of 50K examples, and evaluate performance on the official test set of 10K examples.
We run experiments on both WRN-16-4 and WRN-40-4, at a range of $\varepsilon$ values. 
To determine the best hyper-parameters, we first train on the reduced training set of 45K examples, tuning the learning rate $\eta$ and the noise parameter $\sigma$ on the validation set. We then expand the training set to 50K examples and re-train without additional tuning. 
We report the median accuracy of 5 independent training runs on the CIFAR-10 test set. 
All experiments in this section apply group normalization, weight standardization and parameter averaging as described above.

We first consider the WRN-16-4 model. Using a batch size of $B = 4096$ and augmentation multiplicity $K = 16$, this model achieves a held-out test accuracy of 78.7\% under $(8, 10^{-5})$-DP.
Next, we consider the deeper WRN-40-4 model, and scale up the batch size $B$ to $2^{14} = 16384$ and the augmentation multiplicity $K$ to $32$. 
With this setting, we achieve our best test accuracy on CIFAR-10 of 81.4\% under $(8, 10^{-5})$-DP. 
This exceeds the previous SOTA on CIFAR-10 for this privacy budget \citep{klause2022differentially} by 9.7\%. 
We also provide the test accuracy across a range of $\varepsilon$ values between 1 and 8. 
For these experiments, we tune the learning rate $\eta$ and the noise parameter $\sigma$ independently for each $\varepsilon$ (on the validation set as described above). 
As shown in \Cref{fig:headline_a} and the more detailed comparison in \Cref{table:cifar_from_scratch}, we improve the current SOTA for all values of $\varepsilon \geq 3$.
We note that for small values of $\varepsilon$, the model used by \citet{DBLP:conf/iclr/TramerB21} performs best, likely due to its ability to get good performance with a low number of updates.
However, the capacity of this model is limited -- only reaching 71\% in non-private training \citep{DBLP:conf/iclr/TramerB21}, and thus the more expressive WRN models outperform it whenever $\varepsilon \geq 3$.
We also note that the WRN-16-4 slightly outperforms the deeper WRN-40-4 model at $\varepsilon = 1$, while the WRN-40-4 performs better at all larger values of $\varepsilon$. 
These observations are consistent with our experiments in \Cref{app:optimal_depth}, which show that the optimal model depth increases with $\varepsilon$.

\subsection{Training on ImageNet Without Additional Data}
\label{sec:imagenet_scratch}

\begin{table}[t]
    \centering
    \caption{Top-1 and top-5 accuracy when training on ImageNet using DP-SGD without additional data.}
    \begin{tabular}{llccc}
        \toprule
         \multirow{2}{*}{Method} & \multirow{2}{*}{Model} & \multirow{2}{*}{$(\varepsilon, \delta)$}  & \multicolumn{2}{c}{Accuracy (\%)} \\
         &&& Top-1 & Top-5 \\
         \midrule
         \citet{Kurakin22} & ResNet-18 & $(13.2, 10^{-6})$ & 6.9 & -- \\
         Ours & NF-ResNet-50 & $(8.0, 8\cdot10^{-7})$  & \bf{32.4} & \bf{55.8}\\
         \bottomrule
    \end{tabular}
    \label{table:imagenet_scratch}
\end{table}

To confirm that the insights above can improve the performance of DP-SGD on large datasets, we now train a classifier with DP from random initialization on the ImageNet dataset \citep{ILSVRC15}. 

We use a Normalizer-Free ResNet-50 (NF-ResNet-50) \citep{brock2020characterizing}, which is a modified version of the standard ResNet-50 model, designed to ensure good signal propagation. 
The model does not contain batch normalization layers and it applies weight standardization to convolutional layers \citep{qiao2019micro}. 
We note that this model has been shown to match the performance of the standard batch normalized ResNet-50 for ImageNet classification in non-private training \citep{brock2020characterizing}. 
We initialize the model parameters using Gaussian random variables as for CIFAR-10 classification \citep{glorot2010understanding}, and we remove all explicit regularization methods such as dropout, stochastic depth, label smoothing and weight decay. 
We reserve 10K images from the official training set of ImageNet as our validation set to tune hyper-parameters, and train on the remaining 1.27 million training examples. 
We use the official validation set of 50K examples as our test set. 
We use a batch size of 16384 and set augmentation multiplicity $K=4$. See \Cref{app:experimental_details} for further experimental details. 
Training on ImageNet is significantly more computationally expensive than on CIFAR-10. 
We therefore only ran a very limited sweep of the learning rate $\eta$ and the noise parameter $\sigma$ for a single random seed.

As shown in \Cref{table:imagenet_scratch}, our NF-ResNet-50 model achieves a top-1 accuracy of 32.4\% and a top-5 accuracy of 55.8$\%$ under $(8, 8\cdot10^{-7})$-DP. 
We note that this is a significant improvement on the previous SOTA for training on ImageNet with DP without using additional data, which achieves a top-1 accuracy of 6.9$\%$ under a larger privacy budget of $(13.2, 10^{-6})$-DP using a ResNet-18 model \citep{Kurakin22}.

\section{High-Accuracy Differentially Private Image Classification with Fine-Tuning}
\label{sec:pre-training}

We now consider the setting where we have access to a large non-sensitive/public dataset on which we can pre-train our model, using standard non-private training. 
We then fine-tune this pre-trained model with DP-SGD on the sensitive/private dataset. 
This approach has been recently used successfully when privately fine-tuning language models \citep{Li2021, Yu2021lm}, and it has also been shown to improve the performance of DP-SGD on both CIFAR-10 \citep{abadi2016deep, DBLP:conf/iclr/TramerB21, YuZ0L21} and ImageNet \citep{Kurakin22}. 
However, the best reported performance of DP-SGD on image classification benchmarks is still significantly lower than non-private training, particularly on large challenging datasets like ImageNet.

Using the techniques described in \Cref{sec:dptraining_protocol}, we now show that we can significantly reduce the performance gap between private and non-private models in the fine-tuning regime. 
In \Cref{sec:cifar_finetune} we provide results when fine-tuning on CIFAR-10 and CIFAR-100 from a model pre-trained on ImageNet. 
In \Cref{sec:imagenet} we provide results when fine-tuning on ImageNet using models pre-trained on either JFT-300M or JFT-4B \citep{sun2017revisiting}, two internal datasets containing 300 million and 4 billion labelled images respectively. 
Finally, to confirm the versatility of our approach, in \Cref{sec:places} we fine-tune from JFT-300M to Places-365 \citep{zhou2017places}, a challenging scene recognition dataset.
For all experiments in this section, we explore both fine-tuning the final layer and fine-tuning all layers simultaneously.
Except on ImageNet, almost all our best results are obtained by fine-tuning all layers of the model simultaneously.

\subsection{Fine-tuning on CIFAR-10 and CIFAR-100}
\label{sec:cifar_finetune}

In this section, we consider differentially-private fine-tuning on the CIFAR-10 and CIFAR-100 datasets, using models that were pre-trained on ImageNet without privacy. 
We consider WRN-28-10 and WRN-40-4 models, which we pre-train on ImageNet using images down-sampled to $32\times32$ \citep{chrabaszcz2017downsampled}. 
For pre-training, we use SGD with momentum for 240K iterations with a constant learning rate and a batch size of 1024. We use a weight decay of $5\cdot 10^{-5}$, and apply data augmentation including random flips and crops of the input images with augmentation multiplicity 1 (See \Cref{app:experimental_details} for further experimental details).

\begin{table}[t]
    \centering
    \caption{CIFAR-10 and CIFAR-100 test accuracies when fine-tuning with DP-SGD a 28-10 Wide-ResNet pre-trained on ImageNet (down-sampled to $32\times32$). 
    We report the median accuracy across 5 runs.}
    \begin{tabular}{lc@{\hskip 15pt}cc@{\hskip 15pt}cc}
        \toprule [0.15em]
         \multirow{3}{*}{Fine-tuning Method}
         &
         \multirow{3}{*}{$\varepsilon$}
         &
         \multicolumn{4}{c}{Test Accuracy (\%)} \\
         \cmidrule{3-6}
         &  & \multicolumn{2}{c}{CIFAR-10} & \multicolumn{2}{c}{CIFAR-100} \\
         \cmidrule(lr){3-4} \cmidrule(lr){5-6}
         & & Median & Std. Dev. & Median & Std. Dev. \\
         \midrule [0.1em]
         \multirow{2}{*}{\cite{YuZ0L21}} & 1 & 94.3 & -- & -- & -- \\
         & 2 & 94.8 & -- & -- & -- \\
         \midrule
         \citet{DBLP:conf/iclr/TramerB21} & 2 & 92.7 & -- & -- & -- \\
         \midrule
         \multirow{4}{*}{Classifier layer} & 1 & 93.1 & {\color{gray} (0.03)} & \bf{70.3} & {\color{gray} (0.09)} \\
         & 2 & 93.6 & {\color{gray} (0.05)} & 73.9 & {\color{gray} (0.32)} \\
         & 4 & 94.0 & {\color{gray} (0.08)} & 76.1 & {\color{gray} (0.31)} \\
         & 8 & 94.2 & {\color{gray} (0.07)} & 77.6 & {\color{gray} (0.10)} \\
         \midrule
         \multirow{4}{*}{All layers} & 1 & \bf{94.8} & {\color{gray} (0.08)} & 67.4 & {\color{gray} (0.17)}\\
         & 2 & \bf{95.4} & {\color{gray} (0.15)} & \bf{74.7} & {\color{gray} (0.15)} \\
         & 4 & \bf{96.1} & {\color{gray} (0.06)} & \bf{79.2} & {\color{gray} (0.24)} \\
         & 8 & \bf{96.6} & {\color{gray} (0.08)} & \bf{81.8} & {\color{gray} (0.07)} \\
         \bottomrule [0.15em]
    \end{tabular}
    \label{table:imagenet_cifar_transfer}
\end{table}

We fine-tune the pre-trained model on both the CIFAR-10 and CIFAR-100 datasets using DP-SGD. 
As in \Cref{sec:dptraining_protocol}, we use a constant learning rate without additional regularization techniques, although we do apply data augmentation with augmentation multiplicity $K=16$. 
We explore two fine-tuning scenarios, one where we fine-tune all layers of the model simultaneously, and a second where we only train the final classifier layer. 
We show our results with the WRN-28-10 model in \Cref{table:imagenet_cifar_transfer}. 
On CIFAR-10, we achieve a new SOTA test accuracy of $94.7\%$ under $(1, 10^{-5})$-DP, exceeding the previous SOTA of $94.3\%$ at the same privacy budget from \citet{YuZ0L21}.
We also provide test accuracies at a range of different privacy budgets for both CIFAR-10 and CIFAR-100. Under $(8, 10^{-5})$-DP, we reach 96.7$\%$ on CIFAR-10 and 81.8$\%$ on CIFAR-100. 
On this task, fine-tuning all the layers of the model simultaneously outperforms fine-tuning only the classifier layer.

We provide additional results for fine-tuning on CIFAR-10 with the WRN-40-4 model in \Cref{app:imagenet_cifar_transfer_40_4}. 
While the 40-4 model also performs well and reaches a test accuracy of $95.6\%$ under $(8, 10^{-5})$-DP, the 28-10 model performs better at all $\varepsilon$ values considered. 
To interpret these results, we note that the WRN-28-10 is a stronger pre-trained model, achieving $62.4\%$ top-1 accuracy on the down-sampled ImageNet validation set, while the WRN-40-4 achieves $57.4\%$ top-1 accuracy. 
This phenomenon was consistent across all our fine-tuning experiments: models which achieved stronger performance on the pre-training task consistently achieved higher test accuracies after private fine-tuning. 
Additionally, in \Cref{app:cifar100_cifar10_transfer}, we provide results for privately fine-tuning on CIFAR-10 using a 40-4 Wide-ResNet model pre-trained non-privately on CIFAR-100, where we again show consistent improvements over previous baselines \citep{LuoW0021} at all values of $\varepsilon$.

\begin{figure}[t]
\begin{minipage}[b]{0.478\linewidth}
    \centering
    \includegraphics{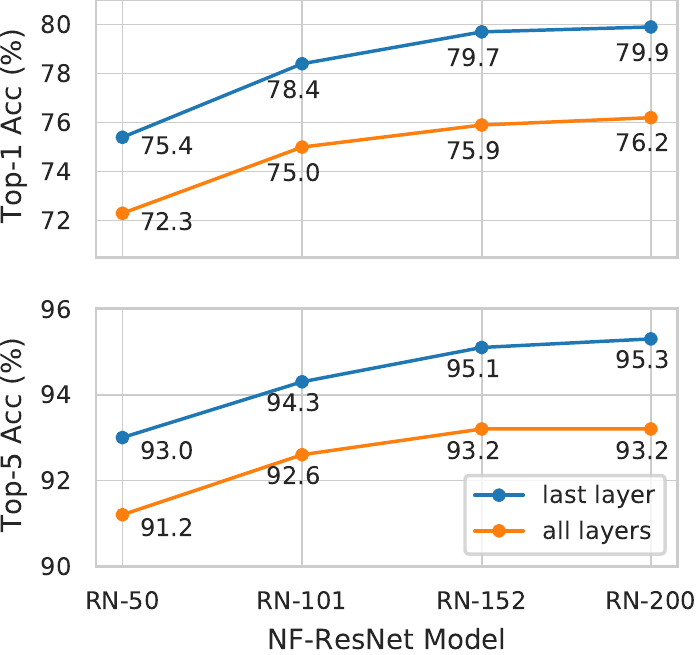}
    \caption{Top-1 and top-5 accuracies of NF-ResNets on ImageNet under $(8, 8\cdot 10^{-7})$-DP, after fine-tuning either the last layer or all layers. To reduce the cost of these experiments we fix $B=1024$ and $\sigma=0.6$.}
    \label{figure:jft-imagenet-depth}
\end{minipage}
\hspace{0.5cm}
\begin{minipage}[b]{0.462\linewidth}
    \centering
    \includegraphics{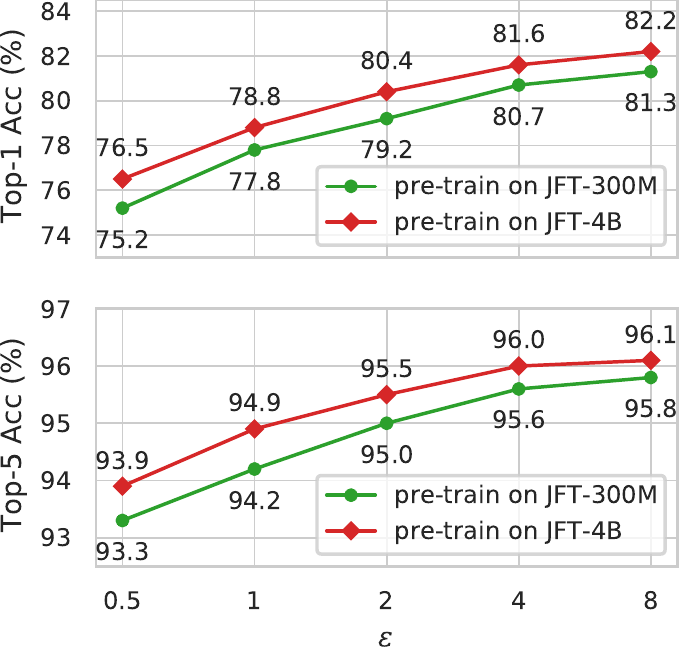}
    \caption{
    Top-1 and top-5 accuracies of NF-ResNet-200 on ImageNet for a range of privacy budgets $\varepsilon$ fine-tuning only the final layer with $B = 2^{18}$. We pre-train either on JFT-300M or on JFT-4B.
    }
    \label{figure:jft-imagenet-sota}
\end{minipage}
\end{figure}

\subsection{Fine-tuning on ImageNet}
\label{sec:imagenet}

In this section, we fine-tune image classifiers with DP-SGD on the ImageNet dataset, using models pre-trained either on JFT-300M or JFT-4B. JFT-300M is an internal dataset of 300 million labelled images from 18k classes, while JFT-4B contains 4 billion images from 30k classes \citep{sun2017revisiting}. 
Pre-training on JFT-300M or JFT-4B has been recently used by several authors to achieve SOTA performance when fine-tuning on ImageNet without privacy \citep{DBLP:conf/icml/BrockDSS21, dosovitskiy2020image, zhai2106scaling}. To preserve privacy guarantees when fine-tuning on ImageNet, we removed all images from both JFT-300M and JFT-4B that were exact or near-duplicates of ImageNet images across common data augmentations \citep{kolesnikov2020big}.\footnote{In an earlier version, we mistakenly used a version of JFT-300M de-duplicated with respect to the validation set of ImageNet but not the training set. After repeating our experiments on a new version of JFT-300M fully de-duplicated with respect to both ImageNet and Places-365, a number of our results improved. We believe this arises because the de-duplication process also removes low-quality images.}

We first use the NF-ResNet architecture \citep{brock2020characterizing} (See \Cref{sec:imagenet_scratch}), which we pre-train on JFT-300M for 10 epochs with the cross entropy loss, using the training procedure described in \citet{DBLP:conf/icml/BrockDSS21}. Interestingly, \citet{DBLP:conf/icml/BrockDSS21} found that NF-ResNets significantly outperformed batch-normalized ResNets when fine-tuning from JFT-300M to ImageNet without privacy. 
We use the same training, validation and test splits for ImageNet as in \Cref{sec:imagenet_scratch}. 
We remove all explicit regularizers from the model such as weight decay or dropout. 
When fine-tuning on ImageNet, we explore both fine-tuning all layers of the network, as well as fine-tuning only the final classifier layer. 
Unless otherwise specified, we fine-tune using DP-SGD with a privacy budget of $(8, 8\cdot 10^{-7})$-DP. 
Because of our computational constraints, we could only afford to run limited sweeps with a single random seed to identify the best hyper-parameters. 
We also do not use augmentation multiplicity to avoid a further increase in the computational cost of training, nor do we use any data augmentation.

We first compare fine-tuning all layers of the model to fine-tuning only the final classifier layer, for a range of NF-ResNet model depths. 
We use a batch size of 1024 and a fixed noise parameter $\sigma = 0.6$, which corresponds to roughly 136 training epochs. 
We tune the learning rate $\eta$ independently for each model. 
Remarkably, we find in \Cref{figure:jft-imagenet-depth} that an NF-ResNet-50 can reach a top-1 accuracy of $75.4\%$ and top-5 accuracy of $93.0\%$ when fine-tuning only the final classifier layer. This far exceeds the previous best reported accuracy for private fine-tuning on ImageNet of $47.8\%$, achieved under a privacy budget of $(10, 10^{-6})$-DP \citep{Kurakin22}. Note that \citet{Kurakin22} used Places-365 \citep{zhou2017places} as the pre-training dataset, which is a significantly smaller dataset compared to JFT-300M.
In \Cref{figure:jft-imagenet-depth}, we also see consistent benefits from increasing the model depth, both when fine-tuning only the last layer and when fine-tuning all layers simultaneously. 
For example, the NF-ResNet-200 model achieves a top-1 accuracy of $79.9\%$ when fine-tuning only the final layer.

We note that, in \Cref{figure:jft-imagenet-depth}, fine-tuning only the final classifier layer consistently performed slightly better than fine-tuning the whole network. 
We therefore only report results when fine-tuning the final classifier layer for our remaining experiments on ImageNet. This also considerably reduces the computational cost of these experiments, since the features can be pre-computed only once and stored on disk -- assuming that no data augmentation is used as is the case for our experiments in this section. This enables us to use a very large batch-size of $B = 2^{18} = 262144$ for the rest of our ImageNet experiments.

We provide the performance of NF-ResNet-200 at a range of privacy budgets $\varepsilon$ in \Cref{figure:jft-imagenet-sota}, where we compare the effect of pre-training either on JFT-300M or on the larger JFT-4B. We pre-train on JFT-300M for 10 epochs and on JFT-4B for 1 epoch.
When privately fine-tuning on ImageNet, we set $T=4000$ (which corresponds to $\sigma=9.1$) when $\varepsilon=8$.
We noticed in \Cref{sec:cifar_sota} that the optimal number of training iterations scaled approximately linearly with $\varepsilon$ when keeping other hyper-parameters constant. 
We use this scaling rule for these experiments to reduce the cost of tuning hyper-parameters. 
For example, at $\varepsilon = 4$ we use half the number of training iterations as our best run with $\varepsilon = 8$. 
The noise parameter $\sigma$ was scaled accordingly, while using the same learning rate.
From \Cref{figure:jft-imagenet-sota}, we see that using a larger and more diverse pre-training dataset consistently improves the accuracy across all values of $\varepsilon$. After pre-training on the larger JFT-4B dataset and scaling up the batch size, the NF-ResNet-200 model achieves a top-1 accuracy of $82.2\%$ and top-5 accuracy of $96.1\%$ under $(8, 8\cdot 10^{-7})$-DP. Note that these experiments required a substantial number of training epochs at $\varepsilon=8$ (roughly 800 epochs at $\sigma=9.1$), significantly larger than is commonly used when fine-tuning in non-private training.

In all our fine-tuning experiments, we found that a stronger pre-trained model, achieved either through using a larger model or through using a larger pre-training dataset, always improved fine-tuning performance.
Indeed, shortly after the first version of our paper was released, \citet{mehta2022large} published a study on differentially private fine-tuning on ImageNet using a large ViT model pre-trained on JFT-4B, in which they achieve 81.1\% top-1 accuracy under (1, $10^{-6}$)-DP and 81.7$\%$ under (4, $10^{-6}$)-DP. Our best results when fine-tuning on ImageNet are achieved with an NFNet-F3 \citep{DBLP:conf/icml/BrockDSS21} pre-trained for 2 epochs on JFT-4B, which we fine-tune for $T=1000$ training iterations using a batch size $B=2^{18}$. We show these NFNet-F3 results in \Cref{table:imagenet_nfnet}. We achieve a top-1 accuracy of $86.7\%$ and top-5 accuracy of $98.0\%$ under $(8, 8\cdot 10^{-7})$-DP.
Most remarkably, this network also achieves a top-1 accuracy of $83.8\%$ under a much tighter privacy budget of $(0.5, 8\cdot 10^{-7})$-DP.

For comparison, fine-tuning the same pre-trained NFNet-F3 without privacy using SGD with Momentum, AGC, and strong additional regularization including Dropout, Weight Decay and Stochastic Depth \citep{DBLP:conf/icml/BrockDSS21} achieves $88.5\%$ top-1 accuracy on ImageNet, which is only $1.8\%$ higher than the accuracy we achieve with DP at $\varepsilon = 8$, and $4.7\%$ higher than the accuracy we obtain at $\varepsilon = 0.5$.
We conclude that our results significantly reduce the gap between private and non-private training on ImageNet when fine-tuning.

\begin{table}[t!]
    \centering
    \caption{ImageNet top-1 accuracy when fine-tuning the last layer of an NFNet-F3 pre-trained on JFT-4B.}
    \begin{tabular}{cccccccc}
    \toprule [0.15em]
    \multirow{2}{*}{Accuracy (\%)} & \multicolumn{6}{c}{$\varepsilon$} & \multirow{2}{*}{Non-private}\\
    \cmidrule(lr){2-7}
    & $0.1$	& $0.5$ & $1.0$	& $2.0$	& $4.0$	& $8.0$	& \\    \midrule [0.1em]
    Top-1 & 77.6 & 83.8 & 84.4 & 85.6 & 86.0 & 86.7 & 88.5  \\
    Top-5 & 93.0 & 96.7 & 96.6 & 97.5 & 97.4 & 98.0 & 98.7 \\
    \bottomrule [0.15em]
    \end{tabular}
    \label{table:imagenet_nfnet}
\end{table}

\begin{table}[t]
    \centering
    \caption{Places-365 accuracy when fine-tuning with and without DP an NF-ResNet-50 pre-trained on JFT-300M.}    
    \begin{tabular}{clcc}
        \toprule [0.15em]
         \multirow{2}{*}{$\varepsilon$} &
         \multirow{2}{*}{Fine-tuning Method}
         & \multicolumn{2}{c}{Accuracy (\%)} \\
         \cmidrule{3-4}
         & & Top-1 & Top-5 \\
         \midrule [0.1em]
         \multirow{2}{*}{8} 
         & Classifier layer & 54.4 & 84.4 \\
         & All layers & \bf{55.1} & \bf{84.6} \\
         \midrule
         \multirow{2}{*}{--} 
         & Classifier layer & 54.5 & 85.3 \\
         & All layers & \bf{57.0} & \bf{87.1} \\
         \bottomrule [0.15em]
    \end{tabular}
    \label{table:places}
\end{table}

\subsection{Fine-tuning on Places-365}
\label{sec:places}

In this section, we provide results for fine-tuning with DP on the Places-365 dataset \citep{zhou2017places}. 
Places-365 is a dataset for scene recognition containing 1.8 million training images from 365 scene categories.
We use the NF-ResNet-50 model \citep{brock2020characterizing} in this section, which is pre-trained on JFT-300M as described in \Cref{sec:imagenet}. 
The current non-private SOTA on this dataset is 60.7$\%$, obtained by a large Vision Transformer \citep{dosovitskiy2020image} after pre-training on a labelled dataset of 3.6 billion public Instagram images \citep{singh2022revisiting}. 
To the best of our knowledge, there is no established SOTA for training on Places-365 with DP.

We fine-tune both with and without privacy, in order to illustrate the scale of the privacy-utility trade-off. 
For both standard and private fine-tuning, no data augmentation is applied and the learning rate is kept constant throughout training. 
The batch size is set to 1024 for non-private training, and to 4096 for private training. 
We do not use any weight decay. 
We tune the learning rate independently for each experiment using a single seed. When using DP-SGD, we set $\delta = 5 \cdot 10^{-7}$. 
We also compare the performance of fine-tuning all layers and fine-tuning only the final classifier layer. 
When training all layers with DP-SGD, we consider a single noise parameter $\sigma = 1$, while when fine-tuning the classifier layer we select the best result from $\sigma \in \{1,2,3\}$.

We present our results in \Cref{table:places}, where we provide the validation accuracy when training with DP-SGD under $(8, 5 \cdot 10^{-7})$-DP, and without privacy using SGD without momentum. 
Training all layers outperforms training the classifier layer for both private and non-private training. 
Our non-private baseline achieves a top-1 validation accuracy of $57.0\%$, which is within a few percent of the current non-private SOTA on this dataset, while our best private model achieves a top-1 accuracy of $55.1\%$, only $1.9\%$ lower than our non-private baseline.

\section{The Interplay between Noise, Batch Size, Compute Budget and Learning Rate}
\label{sec:cifar_tuning}

Carefully tuning the DP-SGD hyper-parameters can significantly boost performance. 
However intuitions from tuning non-private models do not always carry over to private training. For example, in \Cref{figure:batchsize_ablation} we observed an improvement of ${\sim}8\%$ on the validation accuracy after increasing the batch size $B$ and re-tuning the learning rate $\eta$ and noise parameter $\sigma$. 
By contrast, in non-private training under a constant epoch budget, the test accuracy is usually either constant or monotonically decreasing as the batch size rises \citep{smith2017don, smith2020generalization, shallue2018measuring, goyal2017accurate}. 
Below we present some of our findings from an empirical analysis on the impact of different hyper-parameters on the performance of models trained with DP-SGD.

In this section, we only provide experiments on CIFAR-10, using models trained from random initialization without extra data, using the training pipeline described in \Cref{sec:convergence}. 
On this smaller benchmark, we could perform a rigorous empirical evaluation with thorough hyper-parameter sweeps. 
By contrast, on large datasets (e.g., ImageNet), we were unable to tune hyper-parameters to maximize performance, and instead tuned the noise parameter $\sigma$ to ensure the training iteration budget $T$ was feasible within our computational constraints.

\begin{figure}[t]
    \centering
    \subfigure[]{\includegraphics{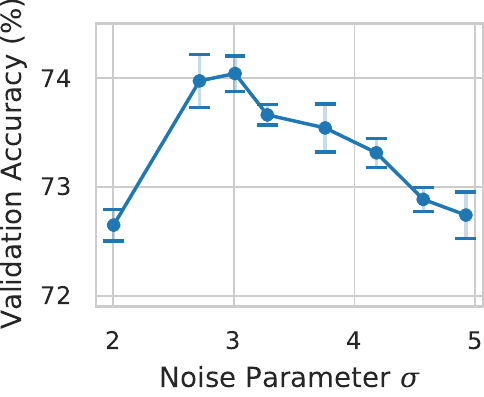}\label{fig:6a}}
     \hfill
    \subfigure[]{\includegraphics{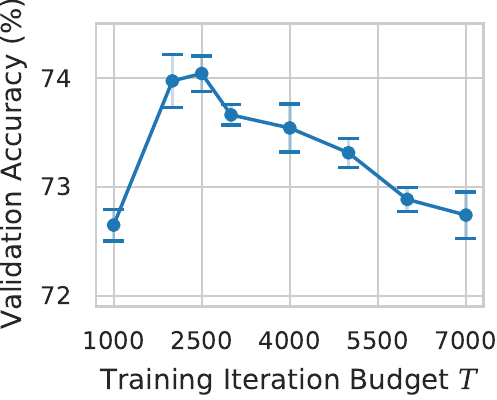}\label{fig:6b}}
    \hfill
    \subfigure[]{\includegraphics{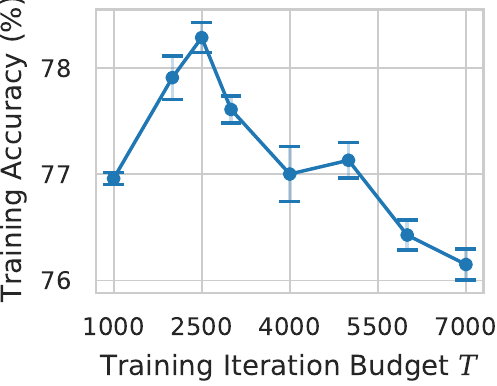}\label{fig:6c}}
    \caption{Training WRN-16-4 on CIFAR-10 at batch size 4096 under $(8, 10^{-5})$-DP. 
    (a) There is an optimal noise parameter, $\sigma_{opt} \approx 3.0$, which maximizes validation accuracy. Note that we tune the learning rate independently for each $\sigma$. 
    (b) Since the training iteration budget is a function of the batch size and the noise parameter, the optimal noise parameter also implies an optimal budget of training iterations, $T_{opt} \approx 2500$. 
    (c) The optimal noise parameter/training iteration budget maximizes training accuracy as well as validation accuracy.
    }
    \label{figure:optimal_compute_train_valid}
\end{figure}

\paragraph{There is an optimal noise parameter at fixed batch size, implying an optimal budget of training iterations.}
Given a privacy budget $(\varepsilon, \delta)$ and a batch size $B$, the noise parameter $\sigma$ determines the maximum budget of training iterations $T$. 
We now investigate whether it is always better to increase the value of $\sigma$ and train longer, or if there is an optimal value of the noise parameter beyond which the performance of DP-SGD degrades.
In \Cref{fig:6a}, we consider a batch size $B=4096$ on the WRN-16-4 under $(8, 10^{-5})$-DP, and we study the validation accuracy achieved across a range of different noise parameters. 
We do not use data augmentation (or augmentation multiplicity) in these experiments, and we re-tune the learning rate on the validation set for each choice of $\sigma$. 
We find that there is an optimal value $\sigma_{opt} \approx 3.0$ that achieves the highest validation accuracy.

The fact there is an optimal value for the noise parameter $\sigma$ implies that there also exists an optimal value for the number of training iterations $T$ (since $\sigma$ and $B$ determine $T$, given a privacy budget), which we illustrate in \Cref{fig:6b}, where the validation accuracy reaches its maximal value for $T_{opt} \approx 2500$.
This observation also holds for the accuracy on the training set, as illustrated in \Cref{fig:6c}. 
By contrast, increasing the number of training iterations almost always achieves higher training accuracy in non-private training \citep{smith2020generalization}.

\begin{figure}[t]
    \centering
    \subfigure[]{\includegraphics{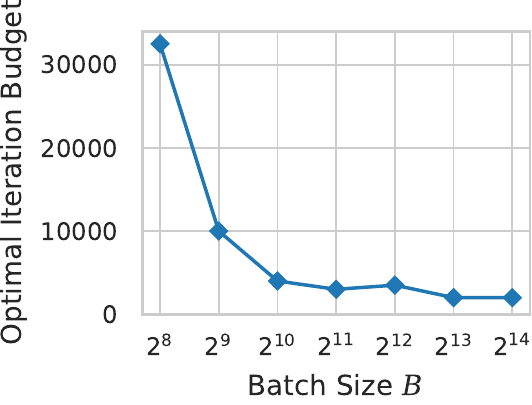}\label{fig:7a}}
    \hfill
    \subfigure[]{\includegraphics{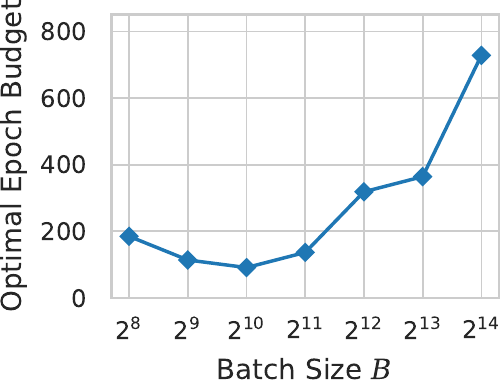}\label{fig:7b}}
    \hfill
    \subfigure[]{\includegraphics{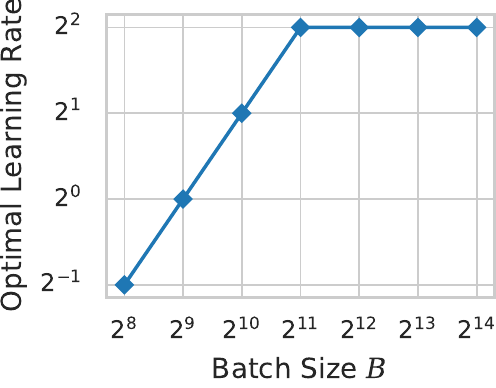}\label{fig:7c}}
    \caption{Training a WRN-16-4 on CIFAR-10 under $(8,10^{-5})$-DP. 
    (a) The optimal training iteration budget falls as the batch size rises for small batch sizes, but is roughly constant for large batch sizes. 
    (b) The optimal epoch budget is roughly constant for small batch sizes, but proportional to batch size for large batch sizes. 
    (c) The optimal learning rate is proportional to batch size for small batch sizes, but constant for large batch sizes. }
    \label{figure:compute_batchsize_scaling}
\end{figure}

\paragraph{The optimal epoch budget increases with the batch size.}
In \Cref{figure:batchsize_ablation}, we showed that increasing the batch size improves the accuracy of DP-SGD (after re-tuning other hyper-parameters). 
Meanwhile, in \Cref{figure:optimal_compute_train_valid} we found that, at a fixed batch size, there is an optimal noise parameter $\sigma_{opt}$ which implies a corresponding optimal budget of training iterations $T_{opt}$.
In \Cref{fig:7a}, we study the same WRN-16-4 model/training pipeline described above, and we plot how the optimal iteration budget $T_{opt}$ (which maximizes validation accuracy after tuning the noise parameter and learning rate) depends on the batch size. 
We find that the optimal iteration budget $T_{opt}$ falls rapidly when the batch size is small, but is roughly constant when the batch size is large. 

However if the number of training iterations is constant, larger batch sizes will perform more training epochs. 
Therefore, in \Cref{fig:7b}, we show how the corresponding optimal epoch budget depends on the batch size. 
We find that the optimal epoch budget is roughly constant when the batch size is small, but increases proportional to the batch size when the batch size is large. 
We conclude that, although large batch sizes achieve superior accuracies with DP-SGD, very large batch sizes also require more compute. 
We recommended that practitioners who are compute constrained choose a moderate batch size in practice (e.g., $B \approx 2048$ for this dataset/architecture). 
Note we provide the train/validation accuracies corresponding to \Cref{figure:compute_batchsize_scaling} in \Cref{figure:batchsize_ablation}.

\paragraph{The optimal learning obeys a linear scaling rule for small batch sizes.} 
In \Cref{fig:7c}, we plot the relationship between the optimal learning rate (after tuning the noise parameter) and the batch size. 
When the batch size is small, the optimal learning rate is proportional to the batch size, while when the batch size is large, the optimal learning rate is constant. 
Similar scaling behaviour has previously been observed in non-private training \citep{mccandlish2018empirical, ma2018power, smith2020generalization}, while linearly scaling the learning rate with the batch size was suggested as a heuristic for private training by \citet{DBLP:conf/iclr/TramerB21}, although as we show this scaling law holds only up to a certain batch size.
Notice that the optimal epoch budget in \Cref{fig:7b} begins to rise at the same batch size where the optimal learning rate ceases to rise in \Cref{fig:7c}.

\section{Discussion}
\label{sec:discussion}

\paragraph{Privacy and fairness considerations.}
We have developed a methodology which achieves high accuracy on academic image classification datasets with DP, however using this methodology for a real-world application on sensitive data would involve several important additional considerations. 
Although in this paper we consider a range of privacy budgets with $\varepsilon$ ranging from below 1 to 8, choosing the privacy budget for a real-world application is non-trivial and would depend on the specific privacy and utility requirements \citep{abowd20censusmodernization}.
In addition, our experiments focus on providing \textit{record-level} privacy at the level of individual images.
In scenarios where a single individual can contribute multiple images to the training data, \textit{user-level} privacy may be preferable \citep{DBLP:conf/iclr/McMahanRT018}.
Furthermore, several authors have shown that differentially private training can potentially reduce the accuracy of models on under-represented subgroups in the dataset \citep{carlini2019distribution, bagdasaryan2019differential, suriyakumar2021chasing, tran2021differentially}, and therefore deploying any application with DP would require careful evaluation, particularly when the dataset is imbalanced. 
Finally, in \Cref{sec:pre-training} we demonstrate that pre-training our networks on large public/non-sensitive datasets significantly enhances the performance achievable by DP-SGD, however this methodology may introduce additional privacy risks if the dataset used for pre-training contains sensitive data \citep{prabhu2020large}.

\paragraph{The computational cost of private training.}
We consistently found that private training requires significantly more compute than non-private training to achieve optimal performance. 
The computational cost of training with DP-SGD can be broken down into two components: the cost of performing a single parameter update given a batch size, and the number of parameter updates that need to be performed for the model to reach a high accuracy.
The cost of a single DP-SGD update is largely dominated by the cost of computing per-example gradients, which is slower than computing the averaged gradient and also requires more memory.
While recent deep learning frameworks like JAX \citep{jax2018github} have significantly reduced these overheads, DP-SGD remains slower than SGD in our experience. 
For example, we observe that computing a parameter update at batch size 1024 for a NF-ResNet-50 on a TPUv3 is about $9\times$ slower with DP-SGD using our implementation than with standard SGD. 
Furthermore, we found that DP-SGD also required more training epochs than non-private training to achieve optimal performance. 
This observation is exacerbated by the use of large batch sizes and/or large augmentation multiplicities, both of which further increase the computational cost of training.
We note that this computational cost is significantly reduced on tasks where fine-tuning only the last layer is sufficient to obtain high accuracy. In our experiments however, we typically found fine-tuning the whole model to be optimal when transferring between datasets other than from JFT-300M/4B to ImageNet.

\paragraph{Correctness.} 
While differentially private algorithms provide strong formal privacy certificates, such guarantees are only realized if the algorithm is correctly implemented, which can be prone to human errors arising from minor variations in the algorithm \citep{DBLP:journals/pvldb/LyuSL17,tramer2022debugging}.
By releasing our implementation we aim to make the code used in this paper verifiable by the DP community.
Additionally, we document key implementation details in \Cref{app:implementation}, and we validate in \Cref{app:debugging:lower_bounds} that the claimed privacy guarantees of our implementation are not contradicted by DP lower bounds obtained using membership inference attacks \citep{tramer2022debugging}. 
Our implementation includes unit-tests, and has undergone two independent internal technical reviews for correctness.

\section*{Conclusion}

We have shown how to unlock high-accuracy image classification with differential privacy by scaling up models, computational budgets, and pre-training data in order to significantly reduce the utility gap between private and non-private image classification on academic datasets. 
For instance, we achieve $83.8\%$ top-1 accuracy on ImageNet under a $(0.5, 8\cdot 10^{-7})$-DP guarantee. 
All of our results are obtained using standard architectures popular in the computer vision community with minimal modification, which should make our methodology easy to re-use in existing pipelines. 
We hope that our insights will significantly reduce the performance penalty that arises when applying differentially private machine learning in real-world applications.

\section*{Acknowledgements}
We would like to thank: Matthias Bauer and Sven Gowal for insightful comments that helped improve the paper's presentation; Robert Stanforth for engineering support, open-sourcing support and code reviews; Rudy Bunel for code reviews; John Aslanides for code quality reviews and open-sourcing support; Alison Reid for support during the open-sourcing process; Andrew Trask and Thomas Steinke for insightful discussions; Zahra Ahmed and Kitty Stacpoole for project management support; Andrew Brock, Taylan Cemgil, Raia Hadsell, Koray Kavukcuoglu, Pushmeet Kohli, Razvan Pascanu and Yee Whye Teh for advice throughout the project.
We would also like to thank Abhradeep Thakurta, Florian Tram{\`e}r and Harsh Mehta for discussions on their related works.

\bibliographystyle{abbrvnat}
\setlength{\bibsep}{5pt} 
\setlength{\bibhang}{0pt}
\bibliography{main}

\pagebreak
\appendix

\newpage
\section{Implementing and Auditing DP-SGD}
\label{app:jax-dp-sgd}

All the experiments reported in this paper use a JAX \citep{jax2018github} implementation of DP-SGD based on JAXline \citep{deepmind2020jax}, a re-usable framework for distributed model training and evaluation.
Our full implementation, including launch scripts for reproducing experiments that do not depend on models pre-trained on JFT-300M, is available at
\begin{center}
    \url{https://github.com/deepmind/jax_privacy}
\end{center}
Besides enabling reproducibility of our results, another important reason for open sourcing our code is to allow the differential privacy community to verify our implementation of DP-SGD.
To help navigate our code base, \Cref{app:implementation} provides an in-depth description of how our code parallelizes the computation of privatized gradients across many devices when using virtual batching and multiple augmentations.
Furthermore, \Cref{app:debugging:lower_bounds} describes how we used the methodology proposed by \citet{tramer2022debugging} to empirically test our implementation against membership inference attacks.

Our implementation leverages the default R{\'e}nyi DP privacy accountant for DP-SGD provided by the TensorFlow Privacy \citep{tensorflowprivacy} implementation.
As a minor technical note, we remark that this accountant assumes that at each iteration mini-batches are sampled with replacement from the entire dataset by including every training example with probability $q$, while in practice we sample mini-batches using a random shuffling scheme,
such that each example is sampled once per training epoch. This discrepancy commonly arises in empirical studies of DP-SGD, as well as in popular open-source implementations \citep{tensorflowprivacy,opacus}.
The main driver behind this discrepancy is the pervasiveness of shuffling-based mini-batching in deep learning frameworks. Obtaining tight numerical accounting methods for shuffled DP mechanisms based on Gaussian perturbations remains an open problem \citep{DBLP:conf/focs/FeldmanMT21,DBLP:journals/corr/abs-2106-00477}.

\subsection{Implementation Details}\label{app:implementation}

\newcommand{\Nacc}{N_{acc}}
\newcommand{\Ndev}{N_{dev}}
\newcommand{\Naug}{K}
\newcommand{\mlocal}{B_{local}}

\begin{algorithm}
\DontPrintSemicolon
\KwIn{Current model parameters $w$, clipping norm $C$, noise multiplier $\sigma$, device id $d$, per-device per-step batch-size $\mlocal$, number of gradient accumulation steps $\Nacc$, number of devices $\Ndev$, number of per-example augmentations $\Naug$,
training examples
$\{(x_{d,s,i}, y_{d,s,i}) : s \in [\Nacc], i \in [\mlocal] \}$,
shared noise samples $\xi_1, \ldots, \xi_{\Nacc} \stackrel{iid}{\sim} \mathcal{N}(0,I)$}
$B \gets \mlocal \cdot \Ndev \cdot \Nacc$\;
$g \gets 0$\;
\For{$s \in \{1,\ldots, \Nacc\}$}{
    \For{$i \in \{1, \ldots, \mlocal\}$}{
        $g_{d,s,i} \gets \frac{1}{C} \texttt{clip}_C\left(\frac{1}{\Naug} \sum_{j=1}^{\Naug} \nabla \mathcal{L}(w, \texttt{augment}(x_{d,s,i}), y_{d,s,i})\right)$\;
    }
    $g_{d,s} \gets \frac{1}{\mlocal} \sum_i g_{d,s,i}$ \tcp*[h]{Average over local mini-batch}\;
    $\hat{g}_{d,s} \gets g_{d,s} + \frac{\sigma \sqrt{\Nacc}}{B} \xi_s$ \tcp*[h]{Add the same (scaled) noise on each device}\;
    $\bar{g}_{s} \gets \frac{1}{\Ndev} \sum_{d'} \hat{g}_{d',s}$ \tcp*[h]{Synchronize average gradient across devices}\;
    $g \gets g + \frac{\bar{g}_{s} - g}{s}$ \tcp*[h]{Numerically stable averaging across accumulation steps}\;
}
\KwRet{$g$} \tcp*[h]{Each device gets the same gradient}

\caption{Private gradient computation across multiple devices with virtual-batching, multiple augmentations, synchronized noise and gradient normalization.}\label{alg:dpsgd-jaxline}
\end{algorithm}

\Cref{alg:dpsgd-jaxline} provides a high-level description of how our DP-SGD implementation computes the privatized gradient used in each model update step. The structure of the code is informed by the way model training pipelines are implemented in JAXline.
The implementation is parallelized across $\Ndev$ devices, where each device runs a copy of \Cref{alg:dpsgd-jaxline}. To extract the maximum possible throughput from the implementation, each device processes training examples in batches of size $\mlocal$, where this parameter is adjusted depending on the memory available in each device and the size of model gradients for the present architecture. To accommodate settings where the desired batch size for a single model update is larger than $\mlocal \cdot \Ndev$, our implementation incorporates gradient accumulation (i.e.\ virtual batching) where $\Nacc$ gradient accumulation steps are performed before each model update, giving a total batch size $B = \mlocal \cdot \Ndev \cdot \Nacc$.

As input to the gradient computation step, each device receives the current model parameters $w$ (which are identical across devices), the desired clipping norm $C$ and noise standard deviation $\sigma$, and their device identifier $d \in \{1, \ldots, \Ndev\}$. In addition, each device $d$ has access to $\Nacc \cdot \mlocal$ training examples $\{(x_{d,s,i}, y_{d,s,i}) : s \in [\Nacc], i \in [\mlocal] \}$, and $\Nacc$ i.i.d.\ samples from a standard Gaussian distribution $\xi_1, \ldots, \xi_{\Nacc}$. Crucially, these samples are \emph{shared} across devices, meaning that noise is independent across gradient accumulation steps but not across devices. This is enforced in our implementation by broadcasting the same pseudo-random number generator key to all devices -- this is preferred over having a different PRNG key per-device because it makes the pipeline more reproducible across training infrastructures with different numbers of devices.  

In the innermost loop of \Cref{alg:dpsgd-jaxline}, client $d$ computes the individual contribution $g_{d,s,i}$ of a single example $x_{d,s,i}$ (where $s$ indexes the accumulation step and $i$ the local batch-size). As discussed in \Cref{sec:convergence}, in our algorithm this contribution is based on the average parameter gradient with respect to $\Naug$ random augmentations of $x_{d,s,i}$. These augmentations are returned by \emph{independent} calls to the $\texttt{augment}$ subroutine. The result of this average is then clipped to a maximal $\ell_2$ norm of $C$ by the $\texttt{clip}_C$ function and then normalized by $C$ (cf.\ \Cref{eq:dpsgd_rescaled}). This results in a contribution $g_{d,s,i}$ to the model update with norm bounded by 1, which is then averaged locally over $i$ on device $d$ to produce $g_{d,s}$. We note that although the loop over $i \in [\mlocal]$ is presented in \Cref{alg:dpsgd-jaxline} as a sequential computation for convenience, in reality our implementation uses hardware parallelism offered by modern accelerators -- this means that as long as the device can process $\mlocal \cdot \Naug$ examples in parallel, the cost of computing $g_{d,s}$ is constant in these parameters.

After computing the average of clipped gradient contributions $g_{d,s}$, each device $d$ adds appropriately calibrated Gaussian noise to obtain $\hat{g}_{d,s}$ -- we emphasize again that in step $s$ each device shares the same random noise sample, so the noise adding process is not independent across devices. At this point devices synchronize their updates by computing the average of $\hat{g}_{d,s}$ over $d \in [\Ndev]$. After this step, each device has the same averaged noisy gradient $\bar{g}_s$. Finally, each device updates their local copy of the accumulated gradient $g$ by using Welford's incremental averaging algorithm for numerical stability \citep{welford1962note}.

That \Cref{alg:dpsgd-jaxline} provides a correct implementation of the privatized gradients required by DP-SGD follows by comparing the following result to \Cref{eq:dpsgd_rescaled}.

\begin{lemma}
Each device participating in \Cref{alg:dpsgd-jaxline} obtains the same noisy gradient given by
\begin{align}
    g = \left\{\frac{1}{B} \sum_{d=1}^{\Ndev} \sum_{s = 1}^{\Nacc} \sum_{i = 1}^{\mlocal} \frac{1}{C} \texttt{clip}_C\left(\frac{1}{\Naug} \sum_{j=1}^{\Naug} \nabla \mathcal{L}(w, \texttt{augment}(x_{d,s,i}), y_{d,s,i})\right)\right\} + \frac{\sigma}{B} \xi \enspace,
    \label{eq:dpsgd-jaxline}
\end{align}
where $\xi \sim \mathcal{N}(0, I)$.
\end{lemma}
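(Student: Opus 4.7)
The plan is to unfold the recursive Welford update at the end of the outer loop, substitute the explicit formulas for $\bar g_s$, $\hat g_{d,s}$, $g_{d,s}$, and $g_{d,s,i}$ in order, and finally reduce the noise terms using Gaussian additivity. The sameness across devices is immediate from the fact that every quantity computed after the all-reduce step depends only on inputs that are identical on every device.

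First, I would verify by induction on $s$ that the Welford-style update $g \gets g + (\bar g_s - g)/s$ with initial value $g = 0$ produces, after $s$ iterations, the arithmetic mean $\frac{1}{s}\sum_{s'=1}^{s} \bar g_{s'}$. Hence at the end of the loop each device holds $g = \frac{1}{\Nacc}\sum_{s=1}^{\Nacc} \bar g_s$. Next, substituting $\bar g_s = \frac{1}{\Ndev}\sum_{d=1}^{\Ndev} \hat g_{d,s}$ and then $\hat g_{d,s} = g_{d,s} + \frac{\sigma\sqrt{\Nacc}}{B}\xi_s$, I would split the double sum into a signal part and a noise part:
\begin{equation*}
g \;=\; \frac{1}{\Nacc \Ndev}\sum_{s=1}^{\Nacc}\sum_{d=1}^{\Ndev} g_{d,s} \;+\; \frac{1}{\Nacc \Ndev}\sum_{s=1}^{\Nacc} \Ndev \cdot \frac{\sigma\sqrt{\Nacc}}{B}\xi_s.
\end{equation*}
For the signal part, I would substitute $g_{d,s} = \frac{1}{\mlocal}\sum_{i=1}^{\mlocal} g_{d,s,i}$ together with the definition of $g_{d,s,i}$ from the innermost loop, and then use $B = \mlocal \Ndev \Nacc$ to collapse the normalizing factor $1/(\Nacc \Ndev \mlocal)$ into the desired $1/B$ prefactor on the triple sum over $d,s,i$.

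For the noise part, simplifying the factors yields $\frac{\sigma}{B\sqrt{\Nacc}}\sum_{s=1}^{\Nacc} \xi_s$. Since $\xi_1,\ldots,\xi_{\Nacc}$ are independent standard Gaussians on $\mathbb{R}^p$, their sum is distributed as $\mathcal{N}(0, \Nacc I)$, which equals $\sqrt{\Nacc}\,\xi$ in distribution for $\xi \sim \mathcal{N}(0,I)$. Therefore $\frac{\sigma}{B\sqrt{\Nacc}}\sum_s \xi_s \stackrel{d}{=} \frac{\sigma}{B}\xi$, matching the noise term in \Cref{eq:dpsgd-jaxline}. The $\sqrt{\Nacc}$ factor inside $\hat g_{d,s}$ is exactly what is needed to make this reduction work, and the use of the same $\xi_s$ across devices (rather than independent per-device noise) is what prevents the all-reduce averaging from further shrinking the noise by $1/\sqrt{\Ndev}$.

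Finally, for the across-device claim, every device executes the same deterministic sequence of operations on inputs that agree after each all-reduce: the shared PRNG key guarantees the $\xi_s$ samples coincide, and $\bar g_s$ is the output of a synchronization barrier, so $g$ is bit-for-bit identical on all devices. I do not expect any serious obstacle here; the only delicate point is keeping the bookkeeping correct for the noise scaling, in particular verifying that the $\sqrt{\Nacc}$ inside the algorithm cancels correctly with the $1/\Nacc$ from averaging accumulation steps and the distributional identity $\sum_s \xi_s \stackrel{d}{=} \sqrt{\Nacc}\xi$.
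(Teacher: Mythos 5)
Your proposal is correct and follows essentially the same route as the paper's proof: an induction showing the Welford update yields the running mean $\frac{1}{\Nacc}\sum_s \bar g_s$, followed by unrolling the definitions of $\bar g_s$, $\hat g_{d,s}$, $g_{d,s}$, $g_{d,s,i}$ and using $\frac{1}{\sqrt{\Nacc}}\sum_s \xi_s \sim \mathcal{N}(0,I)$ to collapse the noise term. The noise bookkeeping in your sketch is exactly right, so nothing further is needed.
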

\begin{proof}
By construction it is clear that each device gets the same gradient.
Now let $g^{(s)}$ be the value of $g$ (on an arbitrary device) at the end of iteration $s$ of the outermost loop. By induction on $s$ we can show that $g^{(s)} = \frac{1}{s} \sum_{s'=1}^{s} \bar{g}_{s'}$: it is clear that $g^{(1)} = \bar{g}_1$, and
\begin{align*}
g^{(s+1)}
=
g^{(s)} + \frac{\bar{g}_{s+1} - g^{(s)}}{s+1}
=
\frac{s g^{(s)} + \bar{g}_{s+1}}{s+1}
=
\frac{1}{s+1} \sum_{s'=1}^{s+1} \bar{g}_{s'} \enspace,
\end{align*}
where the last identity follows by the inductive hypothesis.
Thus, at the end of the algorithm each device gets $g = g^{(\Nacc)} = \frac{1}{\Nacc} \sum_{s=1}^{\Nacc} \bar{g}_s$. Unrolling the computations done at every accumulation step on every device we get:
\begin{align*}
    g
    &=
    \frac{1}{\Nacc} \sum_{s=1}^{\Nacc} \bar{g}_s
    \\
    &=
    \frac{1}{\Nacc \cdot \Ndev} \sum_{s=1}^{\Nacc} \sum_{d=1}^{\Ndev} \hat{g}_{d,s}
    \\
    &=
    \frac{1}{\Nacc \cdot \Ndev} \sum_{s=1}^{\Nacc} \sum_{d=1}^{\Ndev} g_{d,s} + \frac{\sigma}{\sqrt{\Nacc} \cdot \Ndev \cdot B} \sum_{s=1}^{\Nacc} \sum_{d=1}^{\Ndev} \xi_{s}
    \\
    &=
    \frac{1}{\Nacc \cdot \Ndev \cdot \mlocal} \sum_{s=1}^{\Nacc} \sum_{d=1}^{\Ndev} \sum_{i=1}^{\mlocal} g_{d,s,i}
    +
    \frac{\sigma}{\sqrt{\Nacc} \cdot B} \sum_{s=1}^{\Nacc} \xi_{s}
    \enspace.
\end{align*}
The result now follows by observing that the first term in the sum above equals the first term in \Cref{eq:dpsgd-jaxline} and $\frac{1}{\sqrt{\Nacc}} \sum_{s=1}^{\Nacc} \xi_{s} \sim \mathcal{N}(0,I)$.
\end{proof}

\subsection{Privacy Lower Bounds via Membership Inference Attacks}
\label{app:debugging:lower_bounds}

Inspired by recent work on auditing DP algorithms with privacy attacks \citep{jagielski2020auditing, nasr2021adversary, tramer2022debugging}, we carry out membership inference attacks on models trained with \Cref{alg:dpsgd-jaxline}.
The extent to which these attacks are successful can be used to provide (empirical) \emph{lower bounds} on the privacy guarantees afforded by our implementation, which can then be compared with the nominal upper bound.
Note that this type of test is incomplete; a failure to find a violation of the upper bound does not rule out the possibility that one exists. 
However, along with independent code reviews and unit testing, it provides another signal that our implementation of DP-SGD is correct.

\paragraph{DP lower bounds via hypothesis testing.}
Given two datasets, $D$ and $D'=D\cup\{z\}$, that differ by a single data point $z$, a model trained with a DP algorithm reduces an adversary's ability to infer via a simple hypothesis test whether the model was trained on $D$ or $D'$.
More formally, if an algorithm is $(\varepsilon, \delta)$-DP then it must satisfy $\ln{(\frac{1-\beta-\delta}{\alpha})}\leq\varepsilon$
\citep{DBLP:journals/jmlr/HallRW13}, where $\alpha$ and $\beta$ denote the Type I and Type II errors of the hypothesis testing procedure.
If we can construct datasets $D$ and $D'$ such that this bound is violated, we can be sure the algorithm does not provide $(\varepsilon, \delta)$-DP;
evaluating $\ln{(\frac{1-\beta-\delta}{\alpha})}$ through a membership inference attack will give a valid lower bound of $\varepsilon$. In practice, the test relies on comparing the loss (distribution) of point $z$ under models trained with $D$ and $D'$.

\paragraph{Overview of the attack.}
Note that DP is a worst-case guarantee, and so we are free to design our datasets $D$ and $D'$ in any way we choose. In addition, our algorithm provides the same privacy guarantees for many setting of its hyper-parameters.
Our attack operates in two phases: in phase I, we design a learning problem that we expect will maximize the ability to infer membership, while in phase II, we use the procedure set out by \cite{tramer2022debugging} to run a membership inference attack to find a (statistically valid) lower bound for $\varepsilon$. 
In phase I, we sweep over different hyper-parameter configurations and choices for $z$, to find a choice that is likely to maximize the ratio $\nicefrac{1-\beta-\delta}{\alpha}$ when the model is trained with \Cref{alg:dpsgd-jaxline}.
In phase II, we use the best configuration from phase I and train a large number of models to ensure the lower bound is statistically valid with enough confidence.
For completeness, we repeat the same procedure with two dataset sizes ($|D| \in \{100, 60K\}$) and four settings of the nominal privacy parameters: $\varepsilon\in\{1, 2, 4, 8\}$, where we set $\delta=\nicefrac{1}{|D|}$.

\paragraph{Experimental details.}
Throughout, we will use a simple LeNet classifier \citep{lecun1998gradient} as the trained model and take $D$ from the MNIST dataset.
The hyper-parameters and choices of $z$ we evaluated in phase I are given in \Cref{tab:hp_mia_config}.
Given a choice of $(\varepsilon, \delta)$, we find a choice of hyperparameters that maximizes distinguishability between models trained on $D$ and $D'$ as follows: for each hyperparameter configuration, we train 1K models on $D$ and 1K models on $D'$, where the only randomness originates from the noise added from DP-SGD. 
In particular, all models are initialized with the same parameters, as prior work has shown random initialization weakens privacy attacks \citep{jagielski2020auditing, balle2022reconstructing}. 
We then record the loss on $z$ for each model trained on $D$ and on $D'$, and fit two Gaussians over these histograms.
After this, we record the total variation distance between these two Gaussians \citep{nielsen2018guaranteed}, and choose the hyperparameter configuration that maximizes this distance.
The choice of learning rate, clipping norm, and number of model updates that maximize our ability to infer if $z$ was included in training varied for each $\varepsilon\in\{1, 2, 4, 8\}$ and $|D|\in\{100, 60K\}$. 
However, we found that selecting $z$ to be a blank image was a consistently better choice than using uniform noise or mislabeling an MNIST test set example.

In phase II, we repeat the procedure set out by \cite{tramer2022debugging} for each choice of privacy parameters, dataset size and best hyper-parameters from phase I.
For both $D$ and $D'$ we train 500K models, reserving the first 10K models on each setting as the set from which we find a threshold that maximizes the ratio between $1-\beta$ and $\alpha$, and the remaining models are used to evaluate this chosen threshold and report the $\varepsilon$ lower bound.
To gain statistical confidence in our $\varepsilon$ lower bounds we compute the lower bound of the true positive rate, $1-\beta$, and upper bound of the false positive rate, $\alpha$, over the remaining 980K models using Clopper-Pearson confidence intervals. 
With an appropriate choice of significance level this gives us a probabilistic $\varepsilon$ lower bound with 99.9\% confidence.

\paragraph{Results.}
Our results are given in \Cref{tab:mia_results}; we did not find any violation of our reported $(\varepsilon, \delta)$-DP guarantees. 
Note that our lower bounds when using $|D|=100$ are substantially tighter than when training with $|D|=60K$ -- for example at nominal $\varepsilon>1$ training on a small dataset yields $\varepsilon$ lower bounds which are over $2\times$ stronger.
We also provide membership inference AUC and advantage ($1-\beta-\alpha$) scores for inferring if $z$ was or was not included in training over the 980K models, and give upper bounds to the membership advantage that can be derived in closed form from $\varepsilon$ and $\delta$ (c.f. \cite{humphries2020differentially}).
Throughout our entire experiment we chose to train models without augmentation multiplicity to expedite experimentation, however, we ran an additional experiment for $\varepsilon=1$ and $|D|=100$ where we turned on augmentation multiplicity during training with 16 augmentations per image, and found the $\varepsilon$ lower bound was almost unchanged.

\begin{table}[H]
\centering
\caption{Phase I of the $\varepsilon$ lower bound experiment: Finding the best choice of hyperparameters that distinguish models trained with and without $z$.}
\label{tab:hp_mia_config}
\resizebox{\textwidth}{!}{
\begin{tabular}{lc|c}
\toprule
Hyperparameter & \multicolumn{2}{c}{Values}                                                                          \\
\midrule

$z$       & \multicolumn{2}{c}{Uniform noise: \includegraphics[width=0.5cm]{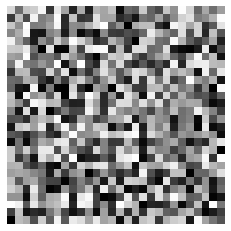}, Blank: \includegraphics[width=0.5cm]{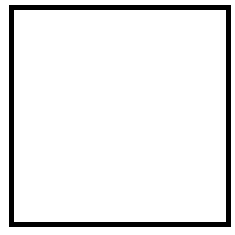}, Label 7 as 8: \includegraphics[width=0.5cm]{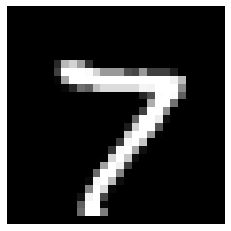},  Label 6 as 7: \includegraphics[width=0.5cm]{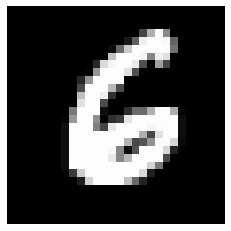}, Label 0 as 1: \includegraphics[width=0.5cm]{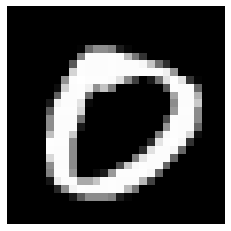}
} \\
Learning rate  & \multicolumn{2}{c}{0.1, 0.5, 1.0}                                                                   \\
Clipping norm  & \multicolumn{2}{c}{0.1, 1.0, 10.0}                                                                  \\
$|D|$ & 60K & 100 \\
Batch size     & 4096 (accumulated over two steps of batch size 2048)              & 64                              \\
Number of updates      & 500, 1K                                                           & 50, 100   \\
\bottomrule
\end{tabular}
}
\end{table}

\begin{table}[H]
\centering
\caption{Phase II of the $\varepsilon$ lower bound experiment: Reporting probabilistic $\varepsilon$ lower bounds with 99.9\% confidence and the membership inference (inferring if $z$ was or was not used in training) AUC and advantage ($1-\beta-\alpha$).}
\label{tab:mia_results}
\resizebox{\textwidth}{!}{
\begin{tabular}{ccccccc}
\toprule
$|D|$ & Nominal $\varepsilon$ & $\varepsilon$ lower bound & Membership AUC & Membership advantage & Membership advantage upper bound \\
\midrule
\multirow{4}{*}{60K}    & 1                                       & 0.279                                                                                                                               & 0.54                                        & 0.06 & 0.46                                              \\
                        & 2                                       & 0.456                                                                                                                             & 0.57                                        & 0.10    & 0.76                                           \\
                        & 4                                       & 1.139                                                                                                                               & 0.62                                        & 0.17     & 0.96                                         \\
                        & 8                                       & 2.153                                                                                                                             & 0.72                                        & 0.31          & 1.00                                    \\

\cmidrule{2-6}
\multirow{4}{*}{100}    & 1                                       & 0.361                                                                                                                             & 0.59                                        & 0.13                     & 0.47                         \\
                        & 2                                       & 0.837                                                                                                                             & 0.66                                        & 0.22     & 0.76                                         \\
                        & 4                                       & 2.461                                                                                                              & 0.79                                        & 0.43        & 0.96                                      \\
                        & 8                                       & 4.327                                                                                                                              & 0.91                                        & 0.67      & 1.00                                        \\
      
\bottomrule
\end{tabular}
}
\end{table}

Note, it is unlikely that taking $D$ to be the MNIST dataset is the optimal choice for maximizing privacy lower bounds; one could imagine taking $D$ to be a pathological dataset with the same pixel value everywhere, and taking $z$ to be an image with different pixel values, would improve distinguishability between models trained on $D$ or $D'$.
However, our aim is to verify that the end-to-end implementation of our DP training procedure is correct, including the data pre-processing part of the pipeline. 
Data pre-processing (including augmentations) may have a null effect on such a pathological dataset, meaning if an error was introduced in this stage of the training pipeline it would not be captured by our membership attack.
Therefore, we chose to use the MNIST dataset to increase the likelihood that any errors from data pre-processing pipeline will be reflected and included in our membership attack.
We leave the design of datasets to elicit the best lower bounds for future work.

\newpage

\section{Additional Experimental Results}\label{app:additional_experiments}

\subsection{Choice of the Clipping Norm under the Modified DP-SGD Update}
\label{app:clipping_rescaling}

In the DP-SGD update as described in \Cref{eq:dpsgd}, the clipping norm $C$ determines the norm of the parameter update, and therefore affects the optimal choice of the learning-rate.
This results in an undesirable dependency between the clipping norm and the learning rate, which can make the performance of DP-SGD appear to be sensitive to the choice of the clipping norm when the learning rate is not carefully tuned \citep{Li2021}.
To decouple the effect of the learning rate and the clipping norm, we instead use a slightly modified version of the DP-SGD update as described in \Cref{eq:dpsgd_rescaled}, where the learning rate $\eta$ absorbs a factor of $C$.
Using this update, performance becomes less sensitive to the choice of clipping norm, as we demonstrate in \Cref{figure:clipping_ablation} where we show the accuracy of the WRN-16-4 over a range of clipping norms and learning rates at batch size 4096 and augmentation multiplicity 16. More importantly, the optimal choice of the learning rate does not depend on the clipping norm under this modified DP-SGD update, as long as the clipping norm is lower or equal to 1.
Therefore, to reduce the cost of hyper-parameter tuning, for all of our experiments in this paper, we fix $C = 1$.

\begin{figure}[H]
    \centering
    \includegraphics{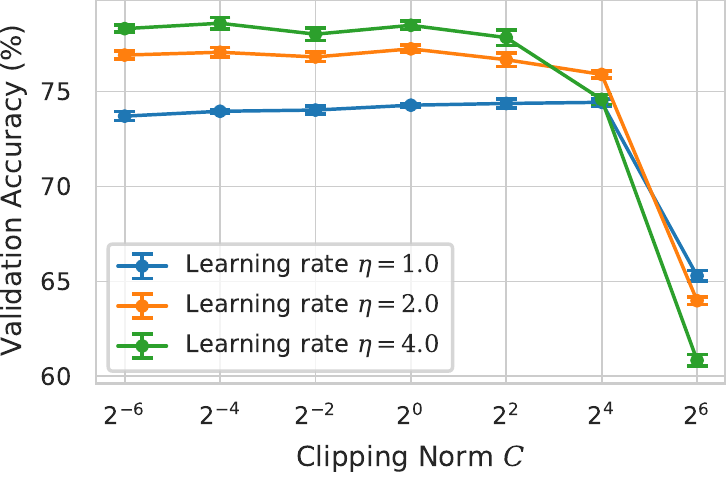}
    \caption{Sensitivity of performance to clipping norm for different learning rates.
    }
    \label{figure:clipping_ablation}
\end{figure}

\subsection{The Optimal Model Depth Depends on the Privacy Budget and Training Setup}
\label{app:optimal_depth}

Scaling the depth of neural networks has been crucial in achieving good performance with non-private training \citep{he2016deep, tan2019efficientnet, DBLP:conf/icml/BrockDSS21}. In this paper, we were interested in exploring whether we could leverage the same benefits of deep models when training with DP-SGD. However, in our initial experiments on WRN models for CIFAR-10 classification without extra data under $(8, 10^{-5})$-DP, we found that both training and validation set performance degraded for networks deeper than the WRN-16-4. 

To understand this, in \Cref{figure:compute_depth_scaling}, we train a range of Wide-ResNets at different depths (from 16 to 100 layers) using a fixed noise parameter $\sigma = 2$ and a batch size of 4096 until exhausting a very large privacy budget of $(64, 10^{-5})$-DP, and track the performance of these models during training. We emphasize that $\varepsilon = 64$ is too large to provide a meaningful notion of privacy, but our goal here is to gain an understanding for the training dynamics.
In our initial experiment, we train without data augmentation. We find that the deeper models start converging slower than the shallower models, likely due to the added noise in DP-SGD. However, when trained for long enough (i.e., for a sufficiently large privacy budget), the deeper models outperform the shallower models. This is illustrated in \Cref{fig:depth_a}, where we show that at low $\varepsilon$ values (which corresponds to early in training for this experiment), the WRN-16-4 outperforms all deeper models, while at high $\varepsilon$ values (i.e., late in training), some of the deeper models start to outperform the shallower models.

Interestingly, we find that using data augmentation with large augmentation multiplicities significantly speeds up convergence of the deeper models. Performing the same experiment as before using augmentation multiplicity $K=16$, we see in \Cref{fig:depth_c} that even early in training at $\varepsilon=8$, the WRN-40-4 significantly outperforms the WRN-16-4. Late in training, at larger values of $\varepsilon$, a much deeper network is optimal compared to when not using augmentation multiplicity. We note this effect of speeding up convergence of deeper models is not simply due to using data augmentation (See \Cref{fig:depth_b} using augmentation multiplicity 1).

We conclude from these results that the optimal network depth depends on the privacy budget, as well as other choices made during training. We notice this in \Cref{table:cifar_from_scratch}, where at a small privacy budget of $(1, 10^{-5})$-DP, the WRN-16-4 slightly outperforms the WRN-40-4. For larger privacy budgets, we find the WRN-40-4 performs better. Additionally, we find the optimal network depth to also depend on whether we train without using any extra data, or if we privately fine-tune a pre-trained network. In the fine-tuning setting, we find larger models to consistently perform better than shallower networks, even at small $\varepsilon$ values, as shown throughout \Cref{sec:pre-training}.

\begin{figure}[H]
    \centering
    \subfigure[$K=0$ (no augmentation)]{\includegraphics{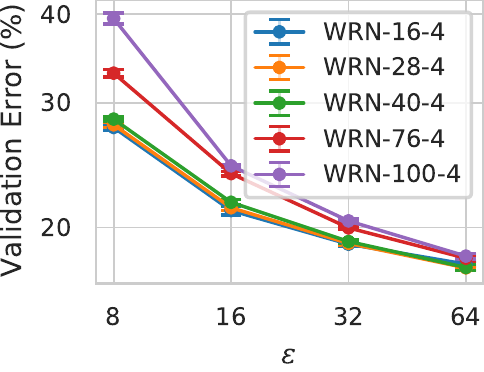}\label{fig:depth_a}}
    \hfill
    \subfigure[ $K=1$]{\includegraphics{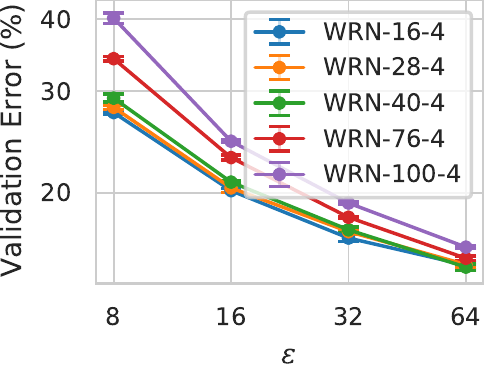}\label{fig:depth_b}}
    \hfill
    \subfigure[ $K=16$]{\includegraphics{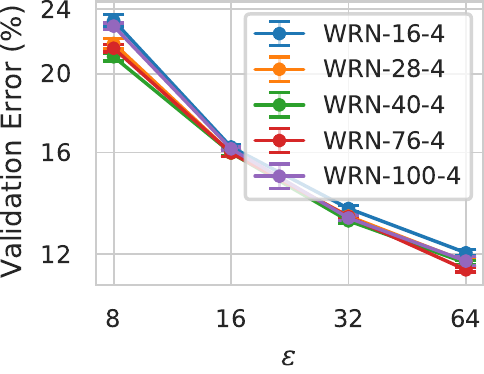}\label{fig:depth_c}}
    \caption{Validation accuracy at different points in training (corresponding to different values of $\varepsilon$) for WRNs at different depths and different augmentation multiplicities $K$, trained with fixed noise parameter $\sigma = 2$.}
    \label{figure:compute_depth_scaling}
\end{figure}

\subsection{Private Fine-Tuning on CIFAR-10 Using a WRN-40-4 Model Pre-trained on ImageNet-32}
\label{app:imagenet_cifar_transfer_40_4}

In \Cref{sec:cifar_finetune}, we provided results for privately fine-tuning a WRN-28-10 on CIFAR-10 and CIFAR-100, which was pre-trained on ImageNet-32. We now provide additional results for this same task using a pre-trained WRN-40-4 model instead, and with the same experimental setup as in \Cref{sec:cifar_finetune}.
We show results in \Cref{table:imagenet_cifar_transfer_40_4}. We fine-tuned all layers of the model simultaneously, which we found to perform better than fine-tuning only the last layer for this experiment. We show that we achieve 95.6\% test accuracy under $(8, 10^{-5})$-DP. As mentioned in \Cref{sec:cifar_finetune}, we find that the WRN-28-10 to outperform the WRN-40-4 at all values of $\varepsilon$ considered, likely due to the WRN-28-10 model being a better pre-trained model.

\begin{table}[H]
    \centering
    \caption{Test accuracy on fine-tuning on CIFAR-10 using a WRN-40-4 pre-trained on ImageNet-32.}
    \begin{tabular}{lcccc}
        \toprule [0.15em]
        Fine-tuning Method & $\varepsilon$ & Test Accuracy (\%) \\
         \midrule [0.1em]
         \multirow{4}{*}{All layers} 
         & 1 & 93.6 \\
         & 2 & 94.3 \\
         & 4 & 95.1 \\
         & 8 & 95.6 \\
         \bottomrule [0.15em]
    \end{tabular}
    \label{table:imagenet_cifar_transfer_40_4}
\end{table}

\subsection{Private Fine-Tuning on CIFAR-10 Using a WRN-40-4 Model Pre-Trained on CIFAR-100}
\label{app:cifar100_cifar10_transfer}

In \Cref{sec:cifar_finetune}, we provided results for privately fine-tuning WRNs on CIFAR-10 and CIFAR-100 using a model pre-trained on ImageNet-32. We now provide additional results, when fine-tuning with DP-SGD on CIFAR-10, using a model pre-trained without privacy on CIFAR-100. 
We compare our results with \citet{LuoW0021}, who provide the current SOTA for this task.
We use the WRN-40-4 model, which we pre-train on CIFAR-100 using SGD with momentum for 100K iterations with a batch size of 128. We use the same fine-tuning procedure on CIFAR-10 as in \Cref{sec:cifar_finetune} (See \Cref{app:experimental_details}), and use batch size 4096 and augmentation multiplicity $K=16$.

We present our results in \Cref{table:cifar_100_cifar_10}. We fine-tuned all layers of the model simultaneously, which we found to perform better than fine-tuning just the last layer for this experiment, similar to \Cref{sec:cifar_finetune}. As before, we outperform previous SOTA results for this task at all privacy budgets considered. In particular we achieve a test accuracy of $80.4\%$ under $(1, 10^{-5})$-DP and a test accuracy of $89.0\%$ under $(8, 10^{-5})$-DP. We note however that the accuracies achieved on CIFAR-10 after pre-training on CIFAR-100 are significantly lower than those obtained after pre-training on ImageNet. This emphasizes the importance of using a strong pre-trained model if we wish to achieve performance competitive with non-private training.

\begin{table}[h]\vspace{-2mm}
    \caption{Test accuracy on CIFAR-10 when privately fine-tuning from the WRN-40-4 pre-trained on CIFAR-100. We report the median test accuracy and the standard deviation across 5 independent runs.}\vspace{-1mm}
	\label{table:cifar_100_cifar_10}
	\centering
	\begin{tabular}{llcc}
		\toprule
		\multirow{2}{*}{$\varepsilon$} & \multirow{2}{*}{Method} & \multicolumn{2}{c}{Test Accuracy (\%)} \\    
		\cmidrule{3-4}
		& & Median & Std. Dev. \\
		\midrule
		\multirow{2}{*}{0.5} & \citet{LuoW0021}  & 73.3 & -- \\
		& Ours      & \textbf{76.7} & {\color{gray} (0.3)} \\
		\midrule
		\multirow{2}{*}{1.0} & \citet{LuoW0021}     & 76.6 & -- \\
		 & Ours     & \textbf{80.4} & {\color{gray} (0.7)} \\
		\midrule
		\multirow{2}{*}{1.5} & \citet{LuoW0021}      & 81.6 & -- \\
		& Ours      & \textbf{83.8} & {\color{gray} (0.3)} \\
		\midrule
		2 & Ours     & \textbf{84.9} & {\color{gray} (0.2)} \\
		4 & Ours & \textbf{87.0} & {\color{gray} (0.2)} \\
		6 & Ours & \textbf{88.4} & {\color{gray} (0.2)} \\
		8 & Ours & \textbf{89.0} & {\color{gray} (0.2)} \\
		\bottomrule
	\end{tabular}
\end{table}

\subsection{Optimal Noise Scale for a Fixed Batch Size: Experiments on the WRN-40-4}
\label{app:optimal_compute_additional}

In \Cref{figure:optimal_compute_train_valid}, we showed that on the WRN-16-4, there is an optimal noise scale, and consequently an optimal compute budget, for a fixed batch size of 4096. To provide additional evidence of this, we show a similar plot in \Cref{figure:optimal_compute_40_4} on a WRN-40-4 trained on CIFAR-10 under $(8,10^{-5})$-DP and using a batch size of 16384 (with all other experimental details the same as in \Cref{figure:optimal_compute_train_valid}). We found this property to hold also at other batch sizes.

\begin{figure}[H]\vspace{-1mm}
    \centering
    \includegraphics{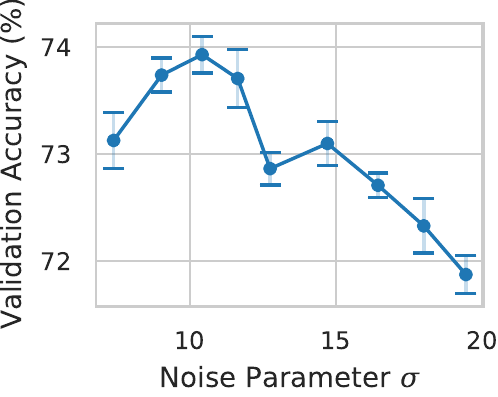} \hfill
    \includegraphics{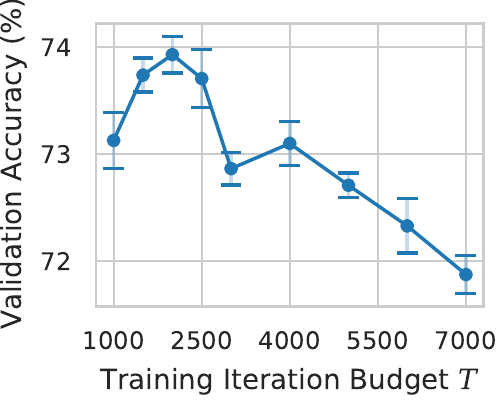} \hfill
    \includegraphics{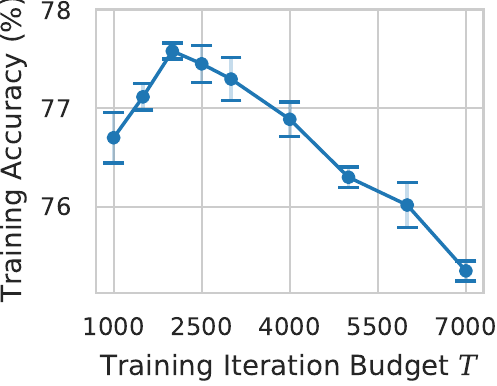}
    \vspace{-1mm}
    \caption{Training a WRN-40-4 on CIFAR-10 at batch size 16384 under $(8,10^{-5})$-DP. 
    }
    \label{figure:optimal_compute_40_4}
\end{figure}

\subsection{Scaling of the Compute Budget with the Batch Size: Experiments on the WRN-40-4}
\label{app:noise_batchsize_scaling}

In \Cref{figure:compute_batchsize_scaling}, we showed how the optimal budget of training iterations and epochs and the optimal learning rate varied  with the batch size on the WRN-16-4 under $(8,10^{-5})$-DP. To provide additional evidence of the main takeaways from \Cref{figure:compute_batchsize_scaling}, we provide additional experiments under the same experimental setup with the WRN-40-4 model in \Cref{figure:compute_batch_scaling_40_4} where we observe qualitatively the same behaviour as in \Cref{figure:compute_batchsize_scaling}. 

\begin{figure}[H] \vspace{-1mm}
    \centering
    \includegraphics{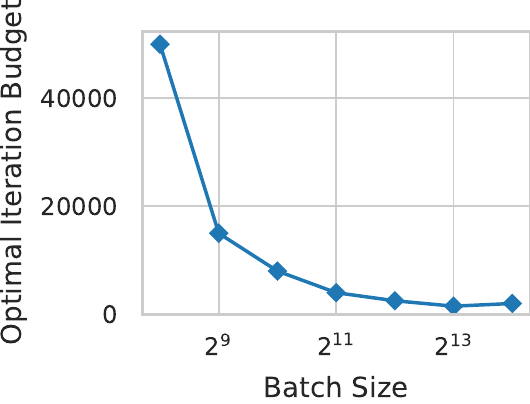} \hfill
    \includegraphics{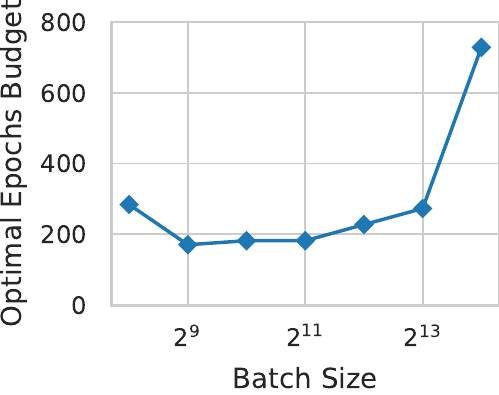} \hfill
    \includegraphics{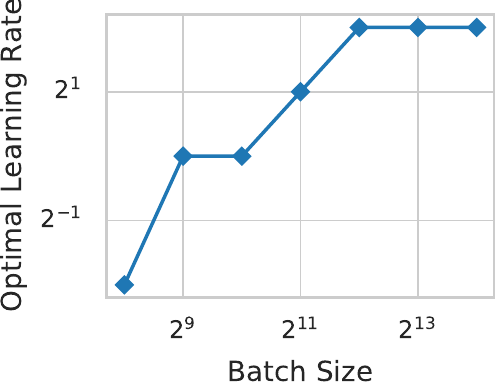}\vspace{-2mm}
    \caption{Optimal compute budgets and optimal learning rates at a range of batch sizes on the WRN-40-4 under $(8,10^{-5})$-DP.}
    \label{figure:compute_batch_scaling_40_4}
\end{figure}

\newpage

\section{Additional Experimental Details}
\label{app:experimental_details}

\subsection{Model Details}
\label{app:experimental_details:models}

\paragraph{Wide-ResNet models.}
We use the Wide-ResNet model family from \citet{DBLP:conf/bmvc/ZagoruykoK16} for all of our experiments on CIFAR-10 and CIFAR-100, and replace all the batch normalization layers with group normalization layers with the number of groups set to 16. We use Gaussian initialization for all weight parameters with variance $(1/\text{fan-in})$ \citep{glorot2010understanding}. We also use Weight Standardization for each convolutional layer as follows:
$$\hat{W}_{i, j} = \dfrac{W_{i, j} - \hat{\mu}_{W_{i, \cdot}}}{\hat{\sigma}_{W_{i,\cdot}}\sqrt{\text{fan-in}}},$$
where $\hat{\mu}_{W_{i, \cdot}}$ and $\hat{\sigma}_{W_{i,\cdot}}$ denote the empirical mean and the standard deviation calculated over the fan-in extent for the convolutional layer $W$. This is similar to the implementation in \citet{brock2020characterizing}. We note for clarity that we employ Weight Standardization throughout training as a differentiable operation. We remove explicit regularizers such as dropout from the model.

\paragraph{NF-ResNet models.}
We use the Normalizer-Free (NF) ResNet family of models from \citet{brock2020characterizing} in most of our experiments on ImageNet and Places-365. When training on ImageNet from random initialization (without using any extra data) as described in \Cref{sec:imagenet_scratch}, we initialize the SkipInit parameter in the NF-ResNets to 1.0, which we found to result in slightly faster convergence when training NF-ResNets with DP-SGD. We remove all explicit regularizers from the model such as dropout and stochastic depth.

\paragraph{NFNet models.} We use the Normalizer-Free F3 Network (NFNet) from \citet{DBLP:conf/icml/BrockDSS21} to achieve our best results when fine-tuning on ImageNet. As before, we remove all explicit regularizers from the model.

\subsection{Dataset Details}
\label{app:experimental_details:datasets}

\paragraph{Overview.}
When using data augmentation, we perform random crops and random horizontal flips at training time. We never use data augmentation at evaluation time. When the augmentation multiplicity is set to 0, it means that no data augmentation is used. Our best hyper-parameters are selected on the validation set, and we report the corresponding performance on the test set for our best results, and on the validation set for all other analysis and ablation experiments. The data splits used for each dataset are reported in \Cref{tab:dataset:split}.

\begin{table}[H]
    \centering
    \caption{Number of samples in each split per dataset.}
    \label{tab:dataset:split}
    \begin{tabular}{lcccc}
        \toprule
         Dataset & Train & Valid & Train + Valid & Test \\
         \midrule
         CIFAR-10 & 45,000& 5,000 & 50,000 & 10,000 \\
         CIFAR-100 & 45,000& 5,000 & 50,000 & 10,000 \\
         \midrule
         ImageNet & 1,271,167 & 10,000 & -- & 50,000 \\
         ImageNet-32 & 1,271,167 & 10,000 & -- & 50,000 \\
         \midrule
         Places-365 & 1,803,460 & -- & -- & 36,500 \\
         \bottomrule
    \end{tabular}
\end{table}

\paragraph{CIFAR.}
When results are reported on the validation set, the model is trained on the $45K$ training images. 
When results are reported on the test set, the model is trained on the full $50K$ training $+$ valid images.
We ensure that the DP accountant uses the corresponding number of examples to calculate the $\varepsilon$ guarantee.
Each image is centered and standardized per channel.
When using random crops, every image is padded with 4 pixels on each edge with \say{mirror} padding.

\paragraph{ImageNet.}
We construct a validation set from the official training set of the ImageNet dataset, so that we can cross-validate hyper-parameters on it. We report our best results on the official validation set. Each image is centered and standardized per channel. For all our experiments training from random initialization without any extra data (as in \Cref{sec:imagenet_scratch}), we use an image size of $224 \times 224$ both during training and evaluation. When fine-tuning on ImageNet from a pre-trained model (as in \Cref{sec:imagenet}), we noticed that using larger image sizes for training and evaluation improved performance, and we use an image size of $320 \times 320$ for all our experiments. When using random crops, we first resize the image to have $1.2$ times the desired height and width, and then take a random crop of the desired size within this large image. 

\paragraph{ImageNet-32.}
We construct a validation set from the official training set similar to the ImageNet dataset.
In the ImageNet-32 dataset, each image from the ImageNet dataset is down-sampled to $32\times 32$. We follow the same image down-sampling procedure as described in \citet{chrabaszcz2017downsampled}. Each image is translated and normalized so as to have values between -1 and 1.
When using random crops, every image is padded with 4 pixels on each edge with \say{mirror} padding. 

\paragraph{Places-365.}
We train on the training set and report results on the test set.
Each image is centered and standardized per channel, and we use an image size of $256 \times 256$ for both training and evaluation.
When using random crops, we first resize the image to have $1.2$ times the desired height and width, and then take a crop of the desired size within this large image.

\subsection{Training Details}

\paragraph{Training with DP-SGD.}
For all our experiments training with DP-SGD (both when training from random initialization or when fine-tuning a pre-trained model), we use a constant learning rate without any learning rate decay. We also do not use explicit regularizers such as weight decay or label smoothing.

We found that using parameter averaging helped reduce the oscillations of the iterates during training and improved accuracy.
While Polyak averaging schemes \citep{polyak1992acceleration} have good theoretical convergence properties in non-private training \citep{rakhlin2011making}, neural network practitioners typically use an exponential moving average (EMA) of the model parameters \citep{tan2019efficientnet, DBLP:conf/icml/BrockDSS21} which is easier to tune.
Following \citet{tan2019efficientnet}, we use EMA with a warm-up schedule where the decay on training iteration $t$ equals $\min(decay\_rate, \frac{1+t}{10+t})$. For our experiments on CIFAR-10 and CIFAR-100, we use a decay rate of 0.9999. For our experiments on ImageNet and Places-365, where we train for a larger number of iterations compared to the CIFAR experiments, we use a decay rate of 0.99999.

\paragraph{Pre-training on ImageNet-32.} 
For non-private pre-training on ImageNet-32 using the WRN-28-10 and the WRN-40-4 (\Cref{sec:cifar_finetune}, we use SGD with a momentum parameter of 0.9. We use a constant learning rate of 0.3 and a weight decay parameter of $5 \cdot 10^{-5}$. We use random crops and random flips of the images with augmentation multiplicity 1. We train for 240K iterations with batch size 1024.

\paragraph{Pre-training on CIFAR-100.}
For non-private pre-training on CIFAR-100 using the WRN-40-4 (\Cref{app:cifar100_cifar10_transfer}), we use SGD with a momentum parameter of 0.9. We use random crops and random flips of the images with augmentation multiplicity 1. We train for 100K iterations with batch size 128. We use an initial learning rate of 0.1, and decay the learning rate by factors of 10 at 50K and 75K iterations. We use a weight decay of $5\cdot 10^{-4}$.

\paragraph{Pre-training on JFT-300M.}
We use the same pre-training procedure on JFT-300M as described in \citet{DBLP:conf/icml/BrockDSS21}. We use SGD with momentum parameter 0.9. We use a learning rate of 0.4, using a warm-up from 0 over 5K iterations and then use cosine annealing to decay the learning rate to 0 through the rest of training. We use random crops and random flips of the images with augmentation multiplicity 1, and train with an image size of $224\times 224$. We train for 10 epochs using a batch size of 1024. We use a weight decay of $10^{-6}$.

\paragraph{Pre-training on JFT-4B.} We use the same procedure for fine-tuning on JFT-4B as described in the previous section on pre-training on JFT-300M, except that we pre-train the NF-ResNet models on JFT-4B  for 1 epoch, instead of 10 epochs. We pre-train the NFNet-F3 models on JFT-4B for 2 epochs.

\subsection{Hyper-parameter Sweeps for the Analyses and Ablations}

For our experiments shown in \Cref{table:method_ablation} and \Cref{figure:batchsize_ablation,figure:optimal_compute_train_valid,figure:compute_batchsize_scaling,figure:augmult_ablation}, we tune the learning rate $\eta$ on a logarithmic grid and the number of training iterations $T$ on a linear grid (and calculate the corresponding noise parameter $\sigma$ for each combination of $\eta$ and $T$ using the privacy accountant)  and made sure the optimal values did not lie on the edge of the hyper-parameter grid. 

\subsection{Optimal Hyper-parameter Values for the Best Results}
\label{app:experimental_details:training}

In the tables below, we now report the hyper-parameters used to achieve our best results for each image classification task considered in this paper.

\begin{table}[H]
    \centering
    \caption{
        Hyper-parameters for training without extra data on CIFAR-10 with a WRN-16-4 (\Cref{table:cifar_from_scratch}).
    }
    \label{tab:experimental_details:training:cifar10-wrn-16-4}
    \begin{tabular}{lcccccc}
        \toprule
         Hyper-parameter & \multicolumn{6}{c}{Value}\\
         \midrule
$\varepsilon$ & 1.0 & 2.0 & 3.0 & 4.0 & 6.0 & 8.0 \\
$\delta$ & $10^{-5}$ & $10^{-5}$ & $10^{-5}$ & $10^{-5}$ & $10^{-5}$ & $10^{-5}$ \\
\midrule
Augmult & 16 & 16 & 16 & 16 & 16 & 16 \\
Batch-size & 4096 & 4096 & 4096 & 4096 & 4096 & 4096 \\
Clipping-norm & 1 & 1 & 1 & 1 & 1 & 1 \\
Learning-rate & 2 & 2 & 2 & 2 & 4 & 4 \\
Noise multiplier $\sigma$ & 10.0 & 6.0 & 5.0 & 4.0 & 3.0 & 3.0 \\
Number Updates & 875 & 1125 & 1593 & 1687 & 1843 & 2468 \\         \bottomrule
    \end{tabular}
\end{table}

\begin{table}[H]
    \centering
    \caption{
        Hyper-parameters for training without extra data on CIFAR-10 with a WRN-40-4 (\Cref{table:cifar_from_scratch}).
    }
    \label{tab:experimental_details:training:cifar10-wrn-40-4}
    \begin{tabular}{lcccccc}
        \toprule
         Hyper-parameter & \multicolumn{6}{c}{Value}\\
         \midrule
$\varepsilon$ & 1.0 & 2.0 & 3.0 & 4.0 & 6.0 & 8.0 \\
$\delta$ & $10^{-5}$ & $10^{-5}$ & $10^{-5}$ & $10^{-5}$ & $10^{-5}$ & $10^{-5}$ \\
\midrule
Augmult & 32 & 32 & 32 & 32 & 32 & 32 \\
Batch-size & 16384 & 16384 & 16384 & 16384 & 16384 & 16384 \\
Clipping-norm & 1 & 1 & 1 & 1 & 1 & 1 \\
Learning-rate & 2 & 2 & 2 & 2 & 4 & 4 \\
Noise multiplier $\sigma$ & 40.0 & 24.0 & 20.0 & 16.0 & 12.0 & 9.4 \\
Number Updates & 906 & 1156 & 1656 & 1765 & 2007 & 2000 \\         \bottomrule
    \end{tabular}
\end{table}

\begin{table}[H]
    \centering
    \caption{
        Hyper-parameters for training without extra data on ImageNet with an NF-ResNet-50 (\Cref{sec:imagenet_scratch}).
    }
    \label{tab:experimental_details:training:imagenet}
    \begin{tabular}{lc}
        \toprule
         Hyper-parameter & \multicolumn{1}{c}{Value}\\
         \midrule
         $\varepsilon$ & 8.0 \\
         $\delta$ & $8\cdot 10^{-7}$\\
         \midrule
        Augmentation multiplicity & 4 \\
        Batch-size & 16384 \\
        Clipping-norm & 1 \\
        Learning-rate & 4 \\
        Noise multiplier $\sigma$ & 2.5 \\
        Number of updates & 71589 \\
         \bottomrule
    \end{tabular}
\end{table}

\begin{table}[H]
    \centering
    \caption{
    Hyper-parameters for ImageNet-32 $\rightarrow$ CIFAR-10, fine-tuning all layers of WRN-28-10 (\Cref{table:imagenet_cifar_transfer}).
    }
    \label{tab:experimental_details:fine-tuning:imagenet-cifar10-full}
    \begin{tabular}{lcccc}
        \toprule
         Hyper-parameter & \multicolumn{4}{c}{Value}\\
          \midrule
          $\varepsilon$ & 1.0 & 2.0 & 4.0 & 8.0 \\
          $\delta$ & $10^{-5}$ & $10^{-5}$ & $10^{-5}$ & $10^{-5}$ \\
          \midrule
        Augmentation multiplicity & 16 & 16 & 16 & 16 \\
        Batch-size & 16384 & 16384 & 16384 & 16384 \\
        Clipping-norm & 1 & 1 & 1 & 1 \\
        Learning-rate & 1 & 1 & 1 & 1 \\
        Noise multiplier $\sigma$ & 21.1 & 15.8 & 12.0 & 9.4 \\
        Number of updates & 250 & 500 & 1000 & 2000 \\
         \bottomrule
    \end{tabular}
\end{table}

\begin{table}[H]
    \centering
    \caption{
    Hyper-parameters for ImageNet-32 $\rightarrow$ CIFAR-10, fine-tuning the last layer of WRN-28-10 (\Cref{table:imagenet_cifar_transfer}).
    }
    \label{tab:experimental_details:fine-tuning:imagenet-cifar10-last}
    \begin{tabular}{lcccc}
        \toprule
         Hyper-parameter & \multicolumn{4}{c}{Value}\\
          \midrule
          $\varepsilon$ & 1.0 & 2.0 & 4.0 & 8.0 \\
          $\delta$ & $10^{-5}$ & $10^{-5}$ & $10^{-5}$ & $10^{-5}$ \\
          \midrule
        Augmentation multiplicity & 16 & 16 & 16 & 16 \\
        Batch-size & 16384 & 16384 & 16384 & 16384 \\
        Clipping-norm & 1 & 1 & 1 & 1 \\
        Learning-rate & 4 & 4 & 4 & 4 \\
        Noise multiplier $\sigma$ & 21.1 & 15.8 & 12.0 & 9.4 \\
        Number of updates & 250 & 500 & 1000 & 2000 \\
         \bottomrule
    \end{tabular}
\end{table}

\begin{table}[H]
    \centering
    \caption{
    Hyper-parameters for ImageNet-32 $\rightarrow$ CIFAR-100, fine-tuning all layers of WRN-28-10 (\Cref{table:imagenet_cifar_transfer}).
    }
    \label{tab:experimental_details:fine-tuning:imagenet-cifar100-full}
    \begin{tabular}{lcccc}
        \toprule
         Hyper-parameter & \multicolumn{4}{c}{Value}\\
          \midrule
          $\varepsilon$ & 1.0 & 2.0 & 4.0 & 8.0 \\
        $\delta$ & $10^{-5}$ & $10^{-5}$ & $10^{-5}$ & $10^{-5}$ \\
        \midrule
         Augmentation multiplicity & 16 & 16 & 16 & 16 \\
        Batch-size & 16384 & 16384 & 16384 & 16384 \\
        Clipping-norm & 1 & 1 & 1 & 1 \\
        Learning-rate & 1 & 1 & 1 & 1 \\
        Noise multiplier $\sigma$ & 21.1 & 15.8 & 12.0 & 9.4 \\
        Number of updates & 250 & 500 & 1000 & 2000 \\
         \bottomrule
    \end{tabular}
\end{table}

\begin{table}[H]
    \centering
    \caption{
        Hyper-parameters for ImageNet-32 $\rightarrow$ CIFAR-100, fine-tuning the last layer of WRN-28-10 (\Cref{table:imagenet_cifar_transfer}).
        }
    \label{tab:experimental_details:fine-tuning:imagenet-cifar100-last}
    \begin{tabular}{lcccc}
        \toprule
         Hyper-parameter & \multicolumn{4}{c}{Value}\\
          \midrule
          $\varepsilon$ & 1.0 & 2.0 & 4.0 & 8.0 \\
        $\delta$ & $10^{-5}$ & $10^{-5}$ & $10^{-5}$ & $10^{-5}$ \\
        \midrule
         Augmentation multiplicity & 16 & 16 & 16 & 16 \\
        Batch-size & 16384 & 16384 & 16384 & 16384 \\
        Clipping-norm & 1 & 1 & 1 & 1 \\
        Learning-rate & 4 & 4 & 4 & 4 \\
        Noise multiplier $\sigma$ & 21.1 & 15.8 & 12.0 & 9.4 \\
        Number of updates & 250 & 500 & 1000 & 2000 \\
         \bottomrule
    \end{tabular}
\end{table}

\begin{table}[H]
    \centering
    \caption{
        Hyper-parameters for CIFAR-100 $\rightarrow$ CIFAR-10, fine-tuning all layers of WRN-40-4 (\Cref{table:cifar_100_cifar_10}).
    }
    \label{tab:experimental_details:fine-tuning:cifar100-cifar10}
    \begin{tabular}{lccccccc}
        \toprule
         Hyper-parameter & \multicolumn{7}{c}{Value}\\
         \midrule
         $\varepsilon$ & 0.5 & 1.0 & 1.5 & 2.0 & 4.0 & 6.0 & 8.0 \\
          $\delta$ & $10^{-5}$ & $10^{-5}$ & $10^{-5}$ & $10^{-5}$ & $10^{-5}$ & $10^{-5}$ & $10^{-5}$ \\
         \midrule
         Augmentation multiplicity & 16 & 16 & 16 & 16 & 16 & 16 & 16 \\
        Batch-size & 1024 & 4096 & 4096 & 4096 & 4096 & 4096 & 4096 \\
        Clipping-norm & 1 & 1 & 1 & 1 & 1 & 1 & 1 \\
        Learning-rate & 0.125 & 1 & 0.3 & 0.3 & 0.3 & 0.1 & 0.1 \\
        Noise multiplier $\sigma$ & 5.0 & 5.0 & 6.0 & 6.0 & 5.0 & 6.0 & 5.0 \\
        Number of updates & 781 & 164 & 531 & 906 & 2171 & 6421 & 7226 \\
         \bottomrule
    \end{tabular}
\end{table}

\begin{table}[H]
    \centering
    \caption{
    Hyper-parameters for JFT-300M/4B $\rightarrow$ ImageNet: fine-tuning last layer of NF-ResNet-200 (\Cref{figure:jft-imagenet-sota}).
    }
    \label{tab:experimental_details:fine-tuning:jft-imagenet-best}
    \begin{tabular}{lccccc}
        \toprule
         Hyper-parameter & \multicolumn{5}{c}{Value}\\
          \midrule
$\varepsilon$ & 0.5 & 1.0 & 2.0 & 4.0 & 8.0 \\
$\delta$ & $8\cdot10^{-7}$ & $8\cdot10^{-7}$ & $8\cdot10^{-7}$ & $8\cdot10^{-7}$ & $8\cdot10^{-7}$ \\
\midrule
Augmult & 0 & 0 & 0 & 0 & 0 \\
Batch-size & 262144 & 262144 & 262144 & 262144 & 262144 \\
Clipping-norm & 1 & 1 & 1 & 1 & 1 \\
Learning-rate & 25.6 & 25.6 & 25.6 & 25.6 & 25.6 \\
Noise multiplier $\sigma$ & 28.7 & 21.2 & 15.7 & 11.8 & 9.1 \\
Number Updates & 250 & 500 & 1000 & 2000 & 4000 \\
         \bottomrule
    \end{tabular}
\end{table}

\begin{table}[H]
    \centering
    \caption{
    Hyper-parameters for JFT-4B $\rightarrow$ ImageNet: fine-tuning last layer of NFNet-F3 (\Cref{table:imagenet_nfnet}).
    }
    \label{tab:experimental_details:fine-tuning:jft-imagenet-nfnet}
    \begin{tabular}{lcccccc}
        \toprule
         Hyper-parameter & \multicolumn{6}{c}{Value}\\
          \midrule
$\varepsilon$ & 0.1 & 0.5 & 1.0 & 2.0 & 4.0 & 8.0 \\
$\delta$ & $8\cdot10^{-7}$ & $8\cdot10^{-7}$ & $8\cdot10^{-7}$ & $8\cdot10^{-7}$ & $8\cdot10^{-7}$ & $8\cdot10^{-7}$ \\
\midrule
Augmult & 0 & 0 & 0 & 0 & 0 & 0 \\
Batch-size & 262144 & 262144 & 262144 & 262144 & 262144 & 262144 \\
Clipping-norm & 1 & 1 & 1 & 1 & 1 & 1 \\
Learning-rate & 30 & 10 & 30 & 30 & 100 & 100 \\
Noise multiplier $\sigma$ & 82.6 & 57.2 & 29.9 & 15.8 & 8.4 & 4.6 \\
Number Updates & 100 & 1000 & 1000 & 1000 & 1000 & 1000 \\
         \bottomrule
    \end{tabular}
\end{table}

\begin{table}[H]
    \centering
    \caption{
    Hyper-parameters for fine-tuning JFT-300M $\rightarrow$ Places-365 (\Cref{table:places}).
    }
    \label{tab:experimental_details:fine-tuning:places}
    \begin{tabular}{lcccc}
        \toprule
         Hyper-parameter & \multicolumn{4}{c}{Value}\\
          \midrule
          Fine-tuning & Entire Model & Last Layer & Entire Model & Last Layer \\
          \midrule
          $\varepsilon$ & 8.0 & 8.0 & $\infty$ & $\infty$ \\
          $\delta$ & $5\cdot10^{-7}$ & $5\cdot10^{-7}$ & 1 & 1 \\
        \midrule
          Augmentation multiplicity & 0 & 0 & 0 & 0 \\
        Batch-size & 4096 & 4096 & 1024 & 1024 \\
        Clipping-norm & 1 & 1 & $\infty$ & $\infty$ \\
        Learning-rate & 0.1 & 0.1 & 0.3 & 0.1 \\
        Noise multiplier $\sigma$ & 1.0 & 2.0 & 0.0 & 0.0 \\
        Number of updates & 223939 & 1374116 & 8250 & 46500 \\
         \bottomrule
    \end{tabular}
\end{table}
 
\end{document}